\newtheorem{definition}{Definition}
\newtheorem*{theorem*}{Theorem}
\begin{document}

\markboth{ACCEPTED TO APPEAR IN IEEE TRANSACTIONS ON SERVICES COMPUTING 2015, DOI 10.1109/TSC.2015.2480396}{}

\title{Hybrid Optimization Algorithm for Large-Scale QoS-Aware Service Composition}

\author{Pablo~Rodriguez-Mier,
	Manuel~Mucientes,
	and~Manuel~Lama
\IEEEcompsocitemizethanks{
  \IEEEcompsocthanksitem P. Rodriguez-Mier, M. Mucientes and M. Lama
  work at the Centro de Investigación en Tecnoloxías da Información (CiTIUS), 
  Universidade de Santiago de Compostela, Spain.\protect\\
  E-mail: \{pablo.rodriguez.mier,manuel.mucientes,manuel.lama\}@usc.es
  }
  
\thanks{
  © 2015 IEEE. Personal use of this material is permitted. Permission
  from IEEE must be obtained for all other uses, in any current or future media, including
  reprinting/republishing this material for advertising or promotional purposes,
  creating new collective works, for resale or redistribution to servers or lists, or reuse of any
  copyrighted component of this work in other works.
}

}

\IEEEcompsoctitleabstractindextext{%

\begin{abstract}
In this paper we present a hybrid approach for automatic composition of Web services that generates semantic input-output based compositions with optimal end-to-end QoS, minimizing the number of services of the resulting composition. The proposed approach has four main steps: 1) generation of the composition graph for a request; 2) computation of the optimal composition 
that minimizes a single objective QoS function; 3) multi-step optimizations to reduce the search space by identifying equivalent and dominated services; and 4) hybrid local-global search to extract the optimal QoS with the minimum number of services. An extensive validation with the datasets of the Web Service Challenge 2009-2010 and randomly generated datasets shows that: 1) the combination of local and global optimization is a general and powerful technique to extract optimal compositions in diverse scenarios; and 2) the hybrid strategy performs better than the state-of-the-art, obtaining solutions with less services and optimal QoS.
\end{abstract}

\begin{IEEEkeywords}
Service Composition; Service Optimization; Hybrid Algorithm; QoS-aware; Semantic Web Services.
\end{IEEEkeywords}
}

\maketitle

\IEEEdisplaynotcompsoctitleabstractindextext

\IEEEpeerreviewmaketitle

\IEEEpeerreviewmaketitle

\section{Introduction}
\IEEEPARstart{W}{eb} services are self-describing software applications that can be published, discovered and invoked accross the Web using standard technologies\cite{alonso2004web}.
The functionality of a Web service is mainly determined by the functional properties that describe their behaviour in terms of its inputs, outputs, and also possibly additional descriptions that the services may have, such as preconditions and effects. These four characteristics, commonly abbreviated IOPEs, allow the composition and aggregation of Web services into composite Web services that achieve more complex functionalities and, therefore, solve complex user needs that cannot be satisfied with atomic Web services. However, compositions should go beyond achieving a concrete functionality and take into account other requirements such as Quality-of-Service (QoS) to generate also compositions that fit the needs of different contexts. The QoS determines the value of different quality properties 
of services such as response time (total time a service takes to respond to a request) or throughput (number of invocations supported in a given time interval), among others characteristics. These properties apply both to single services and to composite services, where each individual service in the composition contributes to the global QoS. For composite services this implies that having many different services with similar or identical functionality, but different QoS, may lead to a large amount of possible compositions that satisfy the same functionality with different QoS but also with a different number of services. 

However, the problem of generating automatic compositions that satisfy a given request with an optimal QoS is a very complex task, specially in large-scale environments, where many service providers offer services with similar functionality but with different QoS. This has motivated researchers to explore efficient strategies to generate QoS-aware Web service compositions from different perspectives \cite{rao2005survey,Strunk2010}. But despite the large number of strategies proposed so far, the problem of finding automatic compositions that minimize the number of services while guaranteeing the optimal end-to-end QoS is rarely considered. Instead, most of the work has focused on optimizing the global QoS of a composition or improving the execution time of the composition engines. An analysis of the literature shows that only a few works take into consideration the number of services of the resulting optimal QoS compositions. Some notable examples are \cite{Jiang2010a,Yan2009,Aiello2009,Chen2012}. Although 
most of these composition engines are quite efficient in terms of computation time, none of them are able to effectively minimize the total number of services of the solution while keeping the optimal QoS. 

The ability to provide not only optimal QoS but also an optimal number of services is specially important in large-scale scenarios, where the large number of services and the possible interactions among them may lead to a vast amount of possible solutions with different number of services but also with the same optimal QoS for a given problem. Moreover, there can be situations where certain QoS values are missing or cannot be measured. Although the prediction of QoS can partially alleviate this problem~\cite{ZhengMLK13}, it is not always possible to have historical data in order to build statistical models to accurately predict missing QoS. In this context, optimizing not only the available QoS but also the number of services of the composition may indirectly improve other missing properties. This has important benefits for brokers, customers and service providers. From the broker point of view, the generation of smaller compositions is interesting to achieve manageable compositions that are 
easier to execute, monitor, debug, deploy and scale. On the other hand, customers can also benefit from smaller compositions, specially when there are multiple solutions with the same optimal end-to-end QoS but different number of services. This is even more important when service providers do not offer fine-grained QoS metrics, since decreasing the number of services involved in the composition may indirectly improve other quality parameters such as communication overhead, risk of failure, connection latency, etc. This is also interesting from the perspective of service providers. For example, if the customer wants the cheapest composition, the solution with fewer services from the same provider may also require less resources for the same task.

However, one of the main difficulties when looking for optimal solutions is that it usually requires to explore the complete search space among all possible combinations of services, which is a hard combinatorial problem. In fact, finding the optimal composition with the minimum number of services is NP-Hard (see Appendix A).
Thus, achieving a reasonable trade-off between solution quality and execution time in large-scale environments is far from trivial, and hardly achievable without adequate optimizations.

In this paper we focus on the automatic generation of semantic input-output compositions, minimizing both a single QoS criterion and the total number of services subject to the optimal QoS.
The main contributions are:

\begin{itemize}
 \item A multi-step optimization pipeline based on the analysis of non-relevant, equivalent and dominated services in terms of interface functionality and QoS.
 \item A fast local search strategy that guarantees to obtain a near-optimal number of services while satisfying the optimal end-to-end QoS for an input-output based composition request.
 \item An optimal combinatorial search that can improve the solution obtained with the local search strategy by performing an exhaustive combinatorial search to select the composition with the minimum number of services for the optimal QoS.
\end{itemize}

We tested our proposal using the Web Service Challenge 2009-2010 datasets and, also, a different randomly generated dataset with a variable number of services. The rest of the paper is organized as follows: Sec. \ref{sec:Problem} introduces the composition problem, Sec. \ref{sec:Approach} describes the proposed approach, Sec. \ref{sec:Evaluation} presents the results obtained, and Sec. \ref{sec:Conclusions} gives some final remarks.

\section{Related Work}
Automatic composition of services is a fundamental and complex problem in the field of Service Oriented Computing, which has been approached from many different perspectives depending on what kinds of assumptions are made \cite{rao2005survey,Dustdar2005Survey,Bertoli2007,Strunk2010}. AI Planning techniques have been traditionally used in service composition to generate valid composition plans by mapping services to actions in the planning domain \cite{Ponnekanti2002,Bertoli2003,Sirin2004,Sirin2004workshop,Klusch2005,Akkiraju2006}. These techniques work under the assumption that services are complex operators that are well defined in terms of IOPEs, so the problem can be translated to a planning problem and solved using classical planning algorithms. Most of these approaches have been mainly focused on exploiting semantic techniques \cite{Sirin2004,Akkiraju2006,Hatzi2011} and developing heuristics \cite{Klusch2005,Akkiraju2006,OhLK07} to improve the performance of the planners. As a result, and partly given 
by the 
complexity of generating satisfiable plans in the planning domain, these approaches do not generate neither optimal plans (minimizing the number of actions) nor optimal QoS-aware compositions.

Other approaches have studied the QoS-aware composition problem from the perspective of Operation Research, providing interesting strategies for optimal selection of services and optimizing the global QoS of the composition subject to multiple QoS constraints. A common strategy is to reduce the composition problem to a combinatorial Knapsack-based problem, which is generally solved using constraint satisfaction algorithms (such as Integer Programming) \cite{zeng2004qos,Yu2005a,Berbner2006,Alrifai2009,ZouLCHXX14} or Evolutionary Algorithms \cite{CanforaPEV05,WadaSYO12}. Some relevant approaches are \cite{zeng2004qos,Alrifai2009}. In \cite{zeng2004qos} the authors present AgFlow, a QoS middleware for service composition. They analyze two different methods for QoS optimization, a local selection and a global selection strategy. The second strategy is able to optimize the global end-to-end QoS of the composition using a Integer Linear Programming method, which performs better than the suboptimal local selection 
strategy. Similarly, in \cite{Alrifai2009} the authors 
propose a 
hybrid QoS selection approach that combines a global optimization strategy with local selection for large-scale QoS composition. The assumption made by all these approaches is that there is only one composition workflow with a fixed set of abstract tasks, where each abstract task can be implemented by a concrete service. Both the composition workflow and the service candidates for each abstract task are assumed to be prefined beforehand, so these techniques are not able to produce compositions with variable size.

A different category of techniques are graph-based approaches that 1) generate the entire composition by selecting and combining relevant services and 2) optimize the global QoS of the composition. These techniques usually combine variants or new ideas inspired by different fields, such as AI Planning, Operations Research or Heuristic Search, in order to resolve more efficiently the automatic QoS composition, usually for a single QoS criterion. Some relevant approaches in this category are the top-3 winners of the Web Service Challenge (WSC) 2009-2010 \cite{Jiang2010a,Yan2009,Aiello2009}. Concretely, the winners of the WSC challenge \cite{Jiang2010a}, presented an approach that automatically discovers and composes services, optimizing the global QoS. This approach also includes an optimization phase to reduce the number of services of the solution. Although the proposed algorithm has in general good performance, as demonstrated in the WSC, it cannot guarantee to obtain optimal solutions in terms of number of 
services. The other participants of the WSC have also the same limitation.

A recent and interesting approach in this category has been recently presented by Jiang et al. \cite{Jiang2014}. In this paper, the authors analyze the problem of generating top \textit{K} query compositions by relaxing the optimality of the QoS in order to introduce service variability. However, the compositions are generated at the expense of worsening the optimal QoS, instead of looking first for all possible composition alternatives with the minimum number of services that guarantee the optimal QoS. 

Another interesting graph-based approach has been presented in \cite{Chen2012}. In this paper, the authors propose a service removal strategy that detects services that are redundant in terms of functionality and QoS. Results show that service removal techniques can be very effective to reduce the number of services before extracting the final composition, as anticipated by other similar approaches \cite{Barakat2011,Wagner2011,RodriguezMier2011}. However, some important limitations of this work are: 1) The QoS is not always optimal, since the graph  generated for the composition is not complete as it does not contain all the relations between services (it is acyclic) and 2) although the redundancy removal is an effective technique that can be used also to prune the search space, this strategy itself cannot provide optimal results in terms of number of services, and it should be combined with exhaustive search to improve the solutions obtained.

In summary, despite the large number of approaches for automatic QoS-aware service composition there is a lack of efficient techniques that are not only able to optimize the global end-to-end QoS, but also effectively minimize the number of services of the composition. This paper aims to provide an efficient graph-based approach that uses a hybrid local-global optimization algorithm in order to find optimal compositions both in terms of single QoS criteria and in terms of minimum number of services.

\section{Motivation}
\label{sec:Motivation}

\begin{figure*}
 \centering
 \includegraphics[width=\textwidth]{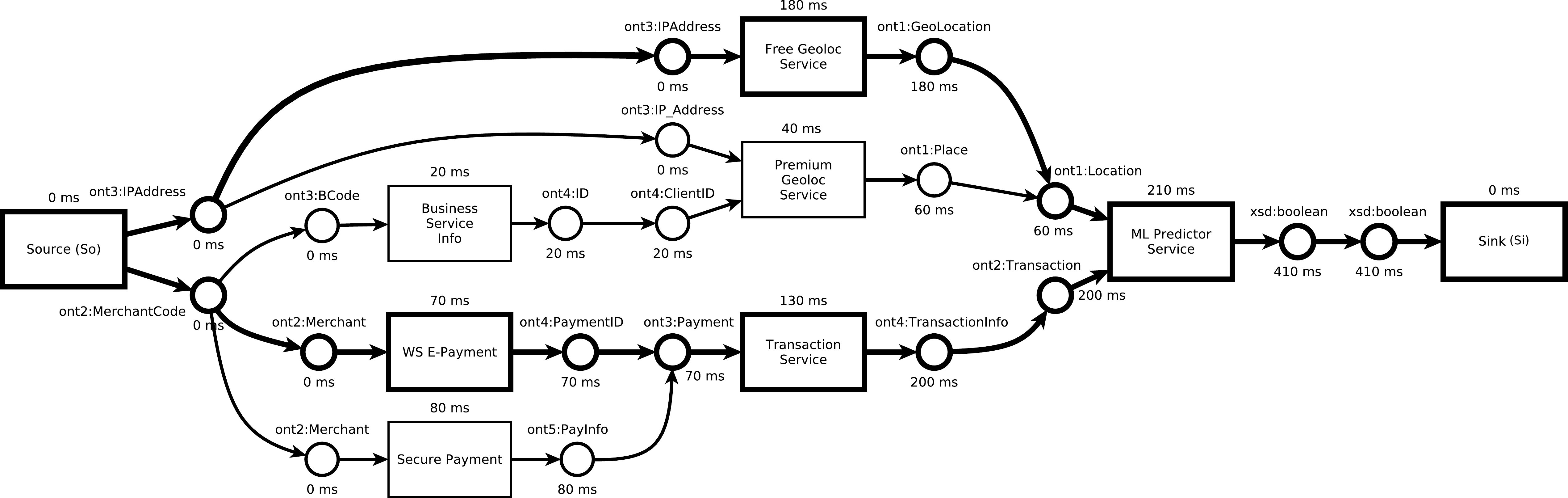}
 \caption{Example of a \textit{Service Match Graph} for a request $R=\{\{\textit{ont3:IPAddress},\textit{ont2:MerchantCode}\},\{\textit{xsd:boolean}\}\}$ to predict whether a business transaction is fraudulent or not. Each service is associated with an average response time. The optimal solution (\textit{Service Composition Graph}), with an overall response time of 410 ms and 4 services (excluding $So$ and $Si$) is highlighted.}
 \label{fig:RealExample}
\end{figure*}

The aim of the automatic service composition problem, as considered in this paper, is to automatically select the best combination of available QoS-aware services in a way that can fulfil a user request that otherwise could not be solved by just invoking a single, existing service. This request is specified in terms of the information that the user provides (inputs), and the information it expects to obtain (outputs). The resulting composition should meet this request with an optimal, single criterion end-to-end QoS and using as less services as possible. 

A motivating example of the problem is shown in Fig. \ref{fig:RealExample}. The figure represents a graph with all the relevant services for a request $R$ where the inputs are $\{\textit{ont3:IPAddress},\textit{ont2:MerchantCode}\}$ and the output is $\{\textit{xsd:boolean}\}$. The goal of this example is to obtain a composition to predict whether a business transaction is fraudulent or not. Each service (associated to a response time QoS) is represented by squares. Inputs and outputs are represented by circles. The graph also contains edges connecting outputs and inputs. These edges represent valid semantic \textit{matches} whenever an output of a service can be passed as an input of a different service. As can be seen, there are some inputs (\textit{ont1:Location},\textit{ont3:Payment}) that can be matched by more than one output, so there are many different ways to combine services to achieve the same goal. 

Although finding the proper combination of services in terms of their inputs/outputs is essential to generate a solution, it is not enough to obtain good compositions, since there can exist different combinations of services with different QoS. Moreover, many different combinations of services may produce compositions with a different number of services but the same end-to-end QoS.
For example, in Fig. \ref{fig:RealExample} we can select \textit{WS E-Payment} service or the \textit{Secure Payment} service to process the electronic payment. However, the second service has a higher response time. Using this leads to a sub-optimal end-to-end QoS of 420 ms. 
However, there are other situations where the selection of different services leads to compositions with different size but same end-to-end QoS. For example, both \textit{Free Geoloc Service} or the \textit{Premium Geoloc Service} can be selected to translate an \textit{IP} to a \textit{Location}. Although the second one has a better average response time (40 ms), it requires an additional service to obtain the \textit{ClientID} for verification purposes. However, selecting the \textit{Premium Geoloc Service} or the \textit{Free Geoloc Service} does not have an impact on the global QoS, since the \textit{ML Predictor Service} has to wait longer to obtain the \textit{Transaction} parameter (200 ms), but it has an impact on the total number of services of the solution.

The goal of this paper is to automatically generate, given a composition request, a graph like the one represented in Fig. \ref{fig:RealExample} as well as to extract the optimal end-to-end QoS composition with the minimum number of services from that graph.

\section{Problem Formulation}
\label{sec:Problem}

We herein formalize the main concepts and assumptions regarding the composition model used in our approach, which consists of a semantic, graph-centric representation of the service composition. These concepts are captured in three main models: 1) a service model, which is used to represent services and define how services can be connected or matched to generate composite services; 2) a graph-based composition model, which is used to represent both service interactions and compositions; and 3) a QoS computation model, which provides the operators required to compute the global QoS in a graph-based composition.

\subsection{Semantic Service Model}
\label{subsec:semantics}
The automatic composition of services requires a mechanism to select appropiated services based on their functional descriptions, as well as to automatic match the services together by linking their inputs and outputs to generate executable data-flow compositions. To this end, we introduce here the main concepts that we use in this paper to support the automatic generation of compositions. This model is an extension of a previous model used in \cite{RodriguezMier2015} to include QoS properties.

\begin{definition}
 A Composition Request $R$ is defined as a tuple $R=\{I_R,O_R\}$, where $I_R$ is the set of provided inputs, and $O_R$ the set of expected outputs. Each input and output is related to a semantic concept from the set $C$ of the concepts defined in an ontology $Ont$ ($In_w,Out_w \subseteq C$). We say that a composition satisfies the request $R$ if it can be invoked with the inputs in $I_R$ and returns the outputs in $O_R$.
\end{definition}

\begin{definition}
 A Semantic Web Service (hereafter ``service'') can be defined as a 
 tuple $w=\{In_w, Out_w, Q_w\} \in W$ where $In_w$ is a set of inputs 
 required to invoke $w$, $Out_w$ is the set of outputs returned by $w$ after its execution, $Q_w=\{q^1_w,\dots,q^n_w\}$ is the set of QoS values associated to the service,
 and $W$ is the set of all services available in the service registry. 
 
 Each input and output is related to a semantic concept from the set $C$ of the concepts defined in an ontology $Ont$ ($In_w,Out_w \subseteq C$).
 Each QoS value $q^i_w \in Q_w$ has a concrete type associated to a set of valid values $Q$. For example, the QoS values of a service $w$ with two different measures, an average response time of 20 ms and an average throughput of 1000 invocations/second, is represented as $Q_w=\{20ms, 1000~\textit{inv/s}\}$, where $20ms \in Q_{RT}$ and $1000~\textit{inv/s} \in Q_{TH}$.
\end{definition}

Semantic inputs and outputs are used to compose the functionality of multiple services by matching their inputs and outputs together. In order to measure the quality of the match, we need a matchmaking mechanism that exploits the semantic I/O information of the services. The different matchmaking degrees that are contemplated are \textit{exact}, \textit{plugin}, \textit{subsumes} and \textit{fail} \cite{Paolucci2002}.

\begin{definition}
\label{def:degree}
 Given $a,b \in C$, \textit{degree(a,b)} returns the degree of match between both concepts (exact, plugin, subsume or fail), which is determined by the logical relationship of both concepts within the Ontology.
\end{definition}

\begin{definition}
\label{def:match}
 Given $a,b \in C$, \textit{match(a,b)} holds if $degree(a,b) \neq fail$.
\end{definition}

In order to determine which concepts are matched by other concepts, we define a matchmaking operator ``$\otimes$'' that given two sets of concepts $C_1, C_2 \subseteq C$, it returns the concepts from $C_2$ matched by $C_1$.

\begin{definition}
\label{def:matchmaking}
 Given $C_1, C_2 \subseteq C$, we define ``$\otimes: C \times C \rightarrow C$'' such that
 $C_1 \otimes C_2 = \{c_2 \in C_2 | match(c_1, c_2), c_1 \in C_1\}$.
\end{definition}

We can use the previous operator to define the concepts of full and partial matching between concepts.

\begin{definition}
 Given $C_1, C_2 \subseteq C$, a full matching between $C_1$ and $C_2$ exists if $C_1 \otimes C_2 = C_2$, whereas
 a partial matching exists if $C_1 \otimes C_2 \subset C_2$.
\end{definition}

\begin{definition}
 Given a set of concepts $C' \subseteq C$, a service $w=\{In_w, Out_w\}$ is invokable if $C' \otimes In_w = In_w$, i.e., there is
 a full match between the provided set of concepts $C'$ and $In_w$, so the information required by $w$ is fully satisfied.
\end{definition}

This internal model used by the algorithm, which captures the core components required to perform semantic matchmaking and composition of services, is agnostic to how semantic services are represented. Thus, the algorithm is not bound to any concrete service description. Concretely, different service descriptions can be handled by the algorithm through the use of iServe importers for OWL-S, WSMO-lite, SAWSDL or MicroWSMO. For further details see \cite{Pedrinaci2010}.

\subsection{Graph-Based Composition Model}

In a nutshell, a data-flow composition of services can be seen as a set of services connected together through their inputs and output, using the semantic model defined before, in a way that every service in the composition is invocable and the invocation of each service in the composition can \textit{transform} a set of inputs into a set of outputs.
These concepts can be naturally captured by graphs, where the vertices represent inputs, outputs and services, and the edges represent semantic matches between inputs and outputs. Here we define the notion of \textit{Service Match Graph} and \textit{Service Composition Graph}. The \textit{Service Match Graph} is a graph that captures all the existent dependencies (matches) between all the relevant services for a composition request. The \textit{Service Composition Graph} is a particular case of the \textit{Service Match Graph} that represents a composition contained in the \textit{Service Match Graph}.

The \textit{Service Match Graph} represents the space of all possible valid solutions for a composition request $R$, and it is defined as a directed graph $G_S=(V,E)$, where:
\begin{itemize}
 \item $V = W_R \cup I \cup O \cup \{So, Si\}$ is the set of vertices of the graph, where $W_R \subseteq W$ is the set of relevant services, $I$ is the set of inputs and $O$ is the set of outputs. $Si$ and $So$ are two special services, called \textit{Source} and \textit{Sink} defined as $So=\{\emptyset, I_R\}$, $Si=\{O_R, \emptyset\}$.
 \item $E = IW \cup WO \cup OI$ is the set of edges in the graph where:
 \begin{itemize}
  \item $IW \subseteq \{(i_w,w) \mid i_w \in I \wedge w \in W \}$ is the set of input edges, i.e., edges
  connecting input concepts to their services.
  \item $WO \subseteq \{(w,o_w) \mid w \in W \wedge o_w \in O \}$ is the set of output edges, i.e., edges
  connecting services with their output concepts.
  \item $OI \subseteq \{(o_w,i_{w'}) \mid o_w,i_{w'} \in (I \cup O) \wedge match(o_w,i_{w'}) \}$ is the set of edges that represent a semantic match
  between an output of $w$ and an input of $w'$.
 \end{itemize}
\end{itemize}

There are also some restrictions in the edge set to ensure that each input/output belongs to a single service:

\begin{itemize}
 \item $\forall i \in I$ $d^{+}_{G_S}(i)=1 \wedge ch_{G_S}(i)=\{w\}, w \in W$ (each input has only one outgoing edge which connects the input with its service)
 \item $\forall o \in O$,$d^{-}_{G_S}(o)=1 \wedge par_{G_S}(o)=\{w\}, w \in W$ (each output has only one incoming edge which connects the output with its service)
\end{itemize}
Function $d^{+}_{G_S}(v)$ returns the outdegree of a vertex $v \in G_S$ (number of children vertices connected to $v$), whereas $d^{-}_{G_S}(v)$ returns the indegree of a vertex $v$ (number of parent vertices connected to $v$). The functions $ch_G(v)$ and $par_G(v)$ are the functions that returns the children vertices of $v$ and the parent vertices of $v \in G_S$, respectively.

Fig. \ref{fig:RealExample} shows an example of a \textit{Service Match Graph} where each service is associated with its average response time. As can be seen, this graph contains many different compositions since there are inputs in the graph that can be matched by the outputs of different services. For example, the parent nodes of the input \textit{ont1:Location} of the service \textit{ML Service Predictor} ($par_G(\textit{ont1:Location)}$) in Fig. \ref{fig:RealExample} are \textit{ont1:GeoLocation} and \textit{ont1:Place}, so the input is matched by two outputs $d^{-}_{G_S}(\textit{ont1:Location})=2$.

A \textit{Service Composition Graph}, denoted as $G_C=(V,E)$, represents a solution for the composition request where each input is exactly matched by one output. Formally, it is a subgraph of \textit{Service Match Graph} ($G_C \subseteq G_S$) that satisfies the following conditions:

\begin{itemize}
 \item $\forall i \in I, d^{-}_{G_C}(i)=1$ (each input is strictly matched by one output)
 \item $G_C$ is a Directed Acyclic Graph (DAG)
\end{itemize}
These conditions are important in order to guarantee that a solution is valid, i.e, each input is matched by an output of a service and each service is invocable (all inputs on the composition are matched with no cyclic dependencies). 
This definition of service composition is language-agnostic, so the resulting DAG is a representation of a solution for the composition problem which can be translated to a concrete language, such as OWL-S or BPEL.

\subsection{QoS Computation Model}

Before looking for optimal QoS service compositions, we need first to define a model to work with QoS over compositions of services which allow us to determine the best QoS that can be achieved for a given composition request on a service repository. When many services are chained together in a composition, the QoS of each individual service contributes to the global QoS of the composition. For example, suppose we want to measure the total response time of a simple composition with two services chained in sequence. The total response time is calculated as the sum of the response time of each service in the composition. However, if the composition has two services in parallel, the total time of the composition is given by the slowest services. Thus, the calculation of the QoS of a composition depends on the type of the QoS and on the structure of the composition.

In order to define the common rules to operate with QoS values in composite services, many approaches use a QoS computation model based on workflow patterns \cite{Cardoso2004}, which is adequate to measure the QoS of control-flow based compositions. However, this paper focuses on the automatic generation of optimal QoS-aware compositions driven by the data-flow analysis of the service dependencies (input-output matches) that are represented as a \textit{Service Match Graph}. 

In this section we explain the general graph-centric QoS computation model that we use, based on the \textit{path algebra} defined in \cite{carre1979graphs}. This model is better suited to compute QoS values in a \textit{Service Match Graph}, which, for extension, is also applicable to the particular case of the \textit{Service Composition Graph}.

\begin{definition}
 $(Q,\oplus,\ominus,\preceq)$ is a QoS algebraic structure to operate with a set of QoS values, denoted as $Q$. This set is equipped with the following elements:
 \begin{itemize}
  \item $\oplus: Q\times Q \rightarrow Q$ is a closed binary operation for aggregating QoS values
  \item $\ominus: Q\times Q \rightarrow Q$ is a binary operation for subtracting QoS values
  \item $\preceq$ is a total order relation on $Q$
 \end{itemize}
\end{definition}

This algebraic structure has the following properties:

\begin{enumerate}
  \item $Q$ is closed under $\oplus$ (any aggregation of two QoS values always returns a QoS value)
  \item The set $Q$ contains an identity element $e$ such that $\forall a \in Q, a \oplus e = e \oplus a = a$
  \item The set $Q$ contains a zero element $\phi$ such that $\forall a \in Q, \phi \oplus a = a \oplus \phi = \phi$
  \item The operator $\oplus$ is associative
  \item The operator $\oplus$ is monotone for $\preceq$ (preserves order). This implies that
 $\forall a,b,c \in Q, a \preceq b \Leftrightarrow a \oplus c \preceq b \oplus c$
 \item The operator $\ominus$ is the inverse of $\oplus$: $a \ominus b = c \Leftrightarrow a = c \oplus b$
 \end{enumerate}

Table \ref{table:qos} shows an example of the concrete elements in this algebra. Note that, for the sake of brevity, only the response time and throughput operators are represented in Table \ref{table:qos}. However, other QoS properties such as cost, availability, reputation, etc, can also be defined by instantiating the corresponding operators.
We denote $Q_{RT}$ the set of QoS values for response time (in milliseconds), $Q_{TH}$ the set of QoS values for throughput (invocations/second). The total order comparator $\preceq$ is required to be able to order and compare different QoS values. Given two QoS values $a,b\in Q$, $a \preceq b$ means that $a$ is equal or \textbf{\textit{better}} than $b$, whereas $b \preceq a$ means that $a$ is equal or \textbf{\textit{worse}} than $b$. The order depends on the concrete comparator defined on $Q$. For example, $Q_{RT}$ uses the comparator $\leq$ to order the response time, so $a,b \in Q_{RT}$, $a \preceq b \Leftrightarrow a \leq b$. For example, given two response times $10ms,20ms \in Q_{RT}$, $10ms \prec 20ms$ ($10ms$ is better than $20ms$) since $10ms < 20 ms$. However, $Q_{TH}$ uses the comparator $\geq$, so $a,b \in Q_{TH}$, $a \preceq b \Leftrightarrow a \geq b$. For example, 
given two throughput values $10\textit{~inv/s},20\textit{~inv/s} \in Q_{TH}$, $20\textit{~inv/s} \prec 10\textit{~inv/s}$ ($20\textit{~inv/s}$ is better than $10\textit{~inv/s}$) since $20\textit{~inv/s} > 10\textit{~inv/s}$. This order relation also affects the behavior of the $\min$ and $\max$ functions. The $\min$ function always selects the \textit{best} QoS value, whereas the $\max$ function always selects the \textit{worst} QoS value.

\begin{table}[htbp]
\centering
\caption{QoS algebra elements for response time and throughput}
\def\arraystretch{1.2}
\begin{tabular}{|c|c|c|c|c|c|}
\hline
\textbf{QoS ($Q$)} & \textbf{$a \oplus b$} & \textbf{$a \ominus b$} & \textbf{$e$} & \textbf{$\phi$} & \textbf{Order ($\preceq$)} \\ \hline \hline
$Q_{RT} =\mathbb{R}_{\geq 0} \cup \{\infty\}$ & a + b & a - b& 0 & $\infty$ & $\leq$ \\ \hline
$Q_{TH} = \mathbb{R}_{\geq 0} \cup \{\infty\}$ & min(a, b) & min(a,b) & $\infty$ & 0 & $\geq$ \\ \hline
\end{tabular}
\label{table:qos}
\end{table}

\begin{definition}
 $F_Q(w): W \rightarrow Q$ is a function that given a service $w \in W$, it returns its corresponding QoS value from $Q_w$ with type $Q$. This function can be seen as a function to measure the QoS of a service.
\end{definition}
 For example, in Fig. \ref{fig:RealExample}, $F_{Q_{RT}}(\textit{Trans. Service})=130 ms$.
\begin{definition}
 $V_Q(w): W \rightarrow Q$ is a function that given a service $w$, it returns its aggregated QoS value. This is defined as:
 \begin{eqnarray}
  V_Q(w) = \begin{cases}
    \max\limits_{\forall i \in In_w}(V^{in}_Q(i)) \oplus F_Q(w) &\text{if $In_w \neq \emptyset$}\\
     F_Q(w) &\text{if $In_w = \emptyset$}
  \end{cases}
  \label{eq:vq}
 \end{eqnarray}
\end{definition}

Informally, this function calculates the aggregated QoS of a service by taking the worst value of the QoS of its inputs plus the current QoS value of the service itself. Taking for example the service \textit{Premium Geoloc Service} from Fig. \ref{fig:RealExample}, $V_{Q_{RT}}(\textit{Premium Geoloc Service})$ is computed as $max(V^{in}_{Q_{RT}}(\textit{ont3:IP\_Address}), V^{in}_{Q_{RT}}(\textit{ont4:ClientID})) \oplus 40ms$, which is $max(0ms, 20ms) \oplus 40ms = 60ms$ (see Def. \ref{def:vqin}).

\begin{definition}
 $V^{out}_Q(o_w): O \rightarrow Q$ is a function that given an output of a service $w$, $o_w \in O$, it returns its aggregated QoS value. The aggregated QoS of an output is equal to the aggregated QoS of a service. Thus, it is defined as: 
 \begin{eqnarray}
  V^{out}_Q(o_w) = V_Q(w)
 \end{eqnarray}
\end{definition}

For example, the aggregated QoS of the output \textit{ont1:Place} ($V^{out}_{Q_{RT}}(\textit{ont1:Place})$) is equal to the aggregated QoS of its service \textit{Premium Geoloc Service} ($V_{Q_{RT}}(\textit{Premium Geoloc Service})$), which is equal to \textit{60ms}.

\begin{definition}
 $V^{in}_Q(i_w): I \rightarrow Q$ is a function that given an input of a service $w$, $i_w \in I$, it returns its optimal aggregated QoS value. This function is defined as:
  
  \begin{eqnarray}
  V^{in}_Q(i_w) = \begin{cases}
    \phi &\text{if $d^-_{G_S}(i_w) = 0$}\\
    V^{out}_Q(o_{w'}), o_{w'}\in par_G(i_w) &\text{if $d^-_{G_S}(i_w) = 1$}\\
    \min\limits_{\forall o_{w'} \in par_G(i_w)}(V^{out}_Q(o_{w'})) &\text{if $d^-_{G_S}(i_w) > 1$}\\
  \end{cases}
 \end{eqnarray}
 \label{def:vqin}
\end{definition}

Given an input $i_w \in In_w$ of a service $w$, this function returns the accumulated QoS for that input. If the evaluated input is not matched by any output ($d^-_{G_S}(i_w) = 0$), then the accumulated QoS of the input is undefined. If the evaluated input is matched by just one output ($d^-_{G_S}(i_w) = 1$), then its accumulated QoS value is equal to the accumulated QoS of that output. If the evaluated input can be matched by more than one output ($d^-_{G_S}(i_w) > 1$), i.e., there are many services that can match that input, then its accumulated QoS value is computed by selecting the optimal (best) QoS.

For example, the optimal aggregated QoS of the input \textit{ont3:Payment} from \textit{Transaction Service} ($V^{in}_{Q_{RT}}(\textit{ont3:Payment})$) is calculated as $\min(V^{out}_{Q_{RT}}(\textit{ont3:PaymentID}),V^{out}_{Q_{RT}}(\textit{ont5:PayInfo}))$ $=70ms$.

\begin{definition}
 We define $V^G_Q(g): G \rightarrow Q$ as a function that given a \textit{Service Match Graph} $g=(V,E)$, it returns its optimal aggregated QoS value. This is defined as:
  \begin{eqnarray}
  V^G_Q(g) = V_Q(Si), S_i \in V
 \end{eqnarray}
\end{definition}

Basically, the optimal QoS of a \textit{Service Match Graph} $G_S$ corresponds with the optimal aggregated QoS of its service $Si \in G_S$.

\subsection{Composition Problem}
Given a composition request $R=\{I_R, O_R\}$, a set of semantic services $W$, a semantic model and a QoS algebra, the composition problem considered in this paper consists of generating the \textit{Service Match Graph} $G_S$ and selecting a composition graph $G_C \subset G_S$ such that:
\begin{enumerate}
 \item $\forall G'_C, V^G_Q(G_C) \leq V^G_Q(G'_C)$, i.e., the composition graph has the best possible QoS
 \item $W_R \subseteq V,|W_R|$ is minimized (the composition graph contains the minimum number of services)
\end{enumerate}

\section{Composition Algorithm}
\label{sec:Approach}
On the basis of the formal definition of the automatic QoS-aware composition problem, in this section we present our hybrid approach strategy for automatic, large-scale composition of services with optimal QoS, minimizing the services involved in the composition. The approach works as follows: given a request, a directed graph with the relevant services for the request is generated. Once the graph is built, an optimal label-correcting forward search is performed in polynomial time in order to compute the global optimal QoS. This information is used later in a multi-step pruning phase to remove sub-optimal services. Finally, a hybrid local/global search is performed within a fixed time limit to extract the optimal solution from the graph. The local search returns a near-optimal solution fast whereas the global search performs an incremental search to extract the composition with the minimum number of services in the remaining time. In this section we explain each step of the algorithm, namely: 1) 
generation of the \textit{Service Match Graph}; 2) calculation of the optimal end-to-end QoS; 3) multi-step graph optimizations and 4) hybrid algorithm.

\subsection{Generation of the Service Match Graph}
Given a composition request, which specifies the inputs provided by the user as well as the outputs it expects to obtain, and a set of available services, the first step consists of locating all the relevant services that can be part of the final composition, as well as computing all possible matches between their inputs and outputs, according to the semantic model presented in Sec. \ref{subsec:semantics}. The output of this step is a \textit{Service Match Graph} that contains many possible valid compositions for the request, as the one represented in Fig. \ref{fig:GraphExample}. In a nutshell, the generation of the graph is calculated by selecting all invocable services layer by layer, starting with $So$ in the first layer (the source service whose outputs are the inputs of the request) and terminating with $Si$ in the last layer (the sink service whose inputs are the outputs of the request) \cite{Rodriguez-MierMVL12}.

\begin{figure}
\begin{algorithmic}[1]
\Function{ServiceMatchGraph}{$R=\{I_R, O_R\}, W$}    
  \State $C := I_R;\ W' := W; W_R := \{So,Si\}$
  \State $unmatchedIn := [\ ];\ availCon := I_R$
  \Repeat
      \State $W_{selected} = \emptyset$
      \State $W_{rel} := \{w \in W' \mid availCon \otimes In_w \neq \emptyset\}$  \label{alg:smg:relevant}
      \State $W_{rel} := W_{rel} \setminus W_R$ 
      \ForAll{$w_i =\{In_{w_i},Out_{w_i}\} \in W_{rel}$}
	  \State $U_{set} := unmatchedIn[w_i]$
	  \State $M_{set} := C \otimes U_{set}$
	  \State $unmatchedIn[w_i] := U_{set} \setminus M_{set}$ \label{alg:smg:unmatched}
	  \If{$M_{set} = \emptyset$}
	    \State $W_{selected} = W_{selected} \cup w_i$
	    \State $availCon := availCon \cup Out_{w_i}$
	  \EndIf
      \EndFor
      \State $W' := W' \setminus W_{selected}$
      \State $W_R := W_R \cup W_{selected}$
      \State $C := C \cup availCon$
      \State $availCon := \emptyset$
  \Until{$W_{selected}=\emptyset$}
  \State
  \Return COMPUTE-GRAPH($W_R$)
\EndFunction
\end{algorithmic}
\caption{Algorithm for generatig a Service Match Graph from a composition request $R$ and a set of services $W$.}\label{Alg:ServiceMatchGraph}
\end{figure}

The pseudocode of the algorithm is shown in Fig. \ref{Alg:ServiceMatchGraph}. The algorithm runs in polynomial time, selecting $W_{selected} \subseteq W$ services at each step. At each layer, the algorithm finds a potential set of relevant services whose inputs are matched by some outputs generated in the previous layer using the $\otimes$ operator (L.\ref{alg:smg:relevant}). Then, for each potential eligible service, the algorithm checks whether the service is invokable or not (i.e., all its inputs are matched by outputs of previous layers) by checking if all the unmatched inputs of the service are matches. All the inputs that are matched are removed from the unmatched set of inputs for the current service (L.\ref{alg:smg:unmatched}). If the service is invokable (has no unmatched inputs), it is selected and its outputs are added to the set of the available concepts. In case the service still has some unmatched inputs, these inputs are stored in a map to check it again in the next 
layer. For example, the first eligible services for the request shown in Fig. \ref{fig:GraphExample} are the services in the layer $L1$, which correspond with the services whose inputs are fully matched by $I_R$ (the set of output concepts produced in $L0$). The second eligible services are those services (placed in $L2$) whose inputs are fully matched by the outputs of the previous layers, and so on. The algorithm stops when no more services are added to the set of selected services. Finally, \textit{COMPUTE-GRAPH} computes all possible matches between the outputs and the inputs of the selected services. The output of this process is a complete \textit{Service Match Graph} that can contain cycles, as the one depicted in Fig. \ref{fig:GraphExample}.

\begin{figure}
 \centering
 \includegraphics[width=\columnwidth]{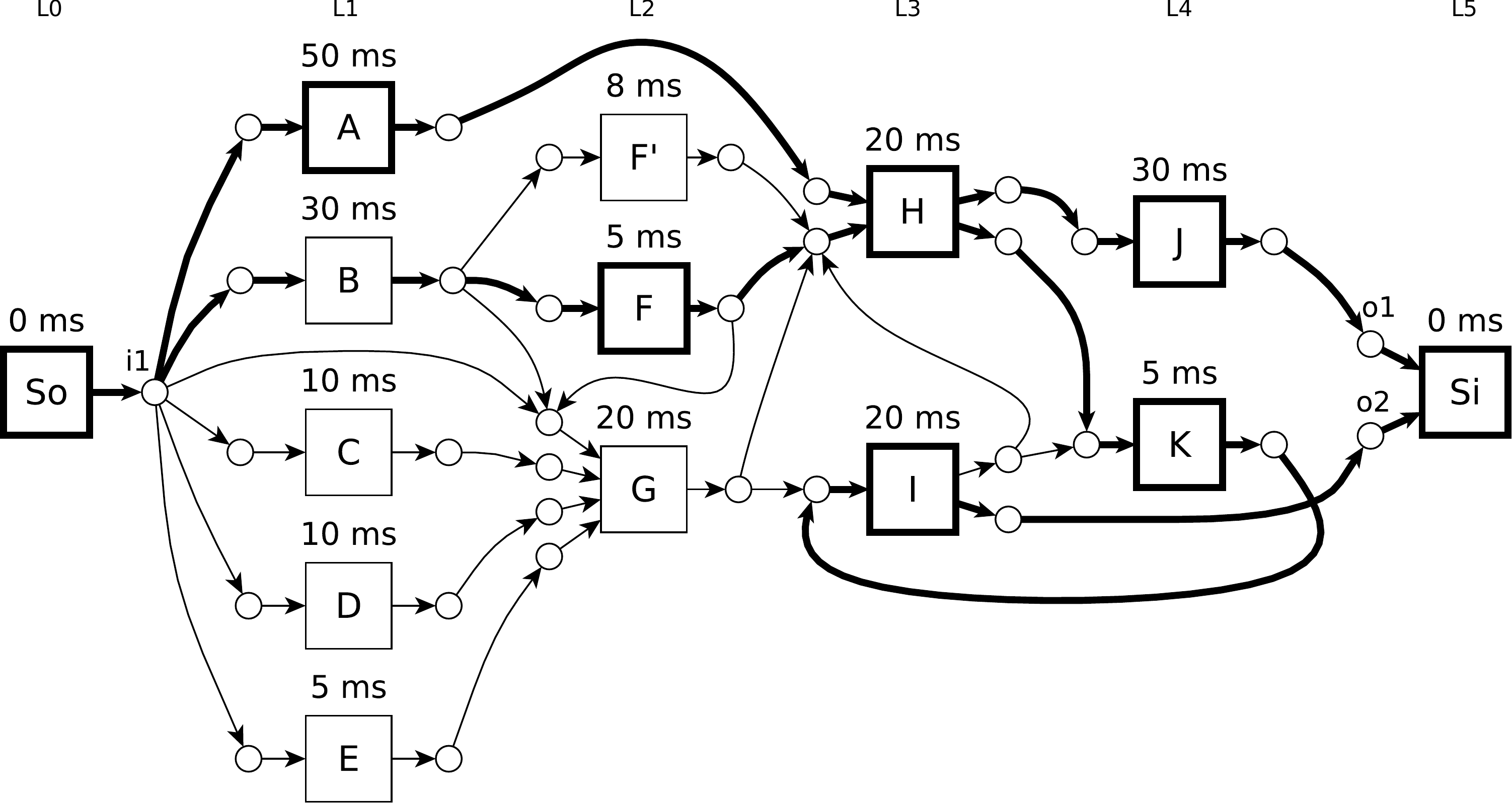}
 \caption{Graph example with the solution with optimal QoS and minimum number of services highlighted.}
 \label{fig:GraphExample}
\end{figure}

\subsection{Optimal end-to-end QoS}
Once the \textit{Service Match Graph} is computed for a composition request, the next step is to calculate the best end-to-end QoS achievable in the \textit{Service Match Graph}.
The optimal end-to-end QoS can be computed in polynomial time using a shortest path algorithm to calculate the best aggregated QoS values for each input and output of the graph, i.e., the best QoS values at which the outputs can be generated and the inputs are matched. In order to compute the optimal QoS, we use a generalized Dijkstra-based label-setting algorithm computed forwards from $So$ to $Si$ \cite{Rodriguez-Mier2012}, based on the algebraic model of the QoS presented in Sec. \ref{sec:Problem}. The optimality of the algorithm is guaranteed as long as the function defined to aggregate the QoS values ($\otimes$) is monotonic, in order to satisfy the principle of optimality. A proof can be found in \cite{Sobrinho2002}.

\begin{figure}
\begin{algorithmic}[1]
\Function{QoS-Update}{$G_S=\{V,E\}$}
  \State \textit{/*qos is a table indexed by inputs (i)}
  \State \textit{associated to their aggregated QoS (q)*/}
  \State $qos[i,q] \gets []$
  \ForAll{$i \in I, I \subset V$}
    \State $qos[i] \gets \phi$
  \EndFor
  \State $queue \gets So$
  \While{$queue \neq \emptyset$}
      \State \textit{/* Queue sorted by aggregated QoS */}
      \State $w \gets$ POP($queue$)
      \State $updated = \{\}$
      \ForAll{$o_w \in Out_w$}
	  \ForAll{$i_{w'} \in ch_G(o_w)$}
	    \If{$V_Q(w) \prec qos[i_{w'}]$}
	      \State $qos[i_{w'}] \gets V_Q(w)$
	      \State $updated \gets updated \cup w'$
	    \EndIf
	  \EndFor
      \EndFor
      \ForAll{$w \in updated$}
	\If{cost $w$ has been improved}
	  \State $queue \gets $INSERT$(w,queue)$
	\EndIf
      \EndFor
  \EndWhile
  \State
  \Return $qos$
\EndFunction
\end{algorithmic}
\caption{Dijkstra-based algorithm to compute the best QoS for each input and output in the \textit{Service Match Graph} $G_S$.}
\label{alg:qos}
\end{figure}

Fig. \ref{alg:qos} shows the pseudocode of the generalized Dijkstra-based label-setting algorithm. The algorithm starts assigning infinite QoS cost to each input in the graph in the table $qos$. An infinite cost for an input means that the input is still not \textit{resolved}. The first service to be processed is $So$. Each time a service $w$ is processed from the queue, the best accumulated QoS cost of each input $i_{w'}$ matched by the outputs of the service $w$ is recalculated. If there is an improvement (i.e., a match with a better QoS is discovered) the affected service is stored in $updated$ to recompute its new aggregated QoS. Finally, for each service $w \in updated$, we recompute its aggregated QoS using the updated values of each affected input. If the QoS has been improved, the service is added to the queue to expand it later.

\subsection{Graph optimizations}

Finding the composition with the minimum number of services is a very hard combinatorial problem which, in most cases, has a very large search space, mainly determined by the size of the \textit{Service Match Graph}. In order to improve the scalability with the number of services, we apply a set of admissible optimizations to reduce the search space. At each pass, the algorithm analyzes different criteria to identify services that are redundant or can be substituted by better ones, so the size of the graph decreases monotonically. The different passes that are sequentially applied are: 1) elimination of services that do not contribute to the outputs of the request; 2) pruning of services that lead to suboptimal QoS; 3) combination of interface (inputs/outputs) and QoS equivalent services; and 4) replacement of interface and QoS dominated services. These optimizations are an extension of the optimizations presented in \cite{RodriguezMier2015} to support QoS.

The \textbf{first pass} selects the set of reachable services in the \textit{Service Match Graph}. Starting from the inputs of $Si$, it selects all those services whose outputs match any inputs of $Si$. This step is repeated with the new services until the empty set is selected. Those services that were not selected do not contribute to the expected outputs of the composition and can be safely removed from the graph.

The \textbf{second pass} prunes the services of the graph that are suboptimal in terms of QoS, i.e., they cannot be part of any optimal QoS composition. To do so, we compute the maximum admissible QoS bound for each input in the graph. In a nutshell, the maximum bound of the inputs of a service $w$ can be calculated by selecting the maximum QoS bound among the bounds of all inputs matched by the outputs of the service $w$ and subtracting the QoS of $w$. This can be recursively defined as:
 
 \begin{align*}
    & max^i_Q(i_w) = \\
    & = \begin{cases} 
	V_Q(w) \ominus F_Q(w) &\text{if $Out_w=\emptyset$}\\
        \max\limits_{\forall o_{w},\forall i_{w'} \in ch_G(o_w)}(max^i_Q(i_{w'})) \ominus F_Q(w) &\text{if $Out_w \neq \emptyset$}\\
    \end{cases}
\end{align*}

 The value of $max^i_Q$ for each input in the graph can be easily calculated by propagating the bounds from $Si$ to $So$. For example, in Fig. \ref{fig:RealExample}, we start computing the maximum bound of the inputs of $Si$ (\textit{xsd:boolean}). Since $Si$ has no outputs, $max^i_Q(\textit{xsd:boolean})$ is calculated as $V_Q(Si)\ominus F_Q(Si)=410~ms-0~ms$. Then, we select all the services whose outputs match \textit{xsd:boolean}. In this case there is just one service, \textit{ML Predictor Service}. The bounds of  its inputs are now computed by subtracting out the $F_Q(\textit{ML Predictor Service})$ from the maximum bound of the inputs that this service matches. Since there is just one input matched (\textit{xsd:boolean} from $Si$) whose bound is \textit{410 ms}, we have $max^i_Q(i) = 410ms - 210ms = 200ms$ for each input $i$ of the service. In the next step, we have three services that match the new calculated inputs (\textit{Free Geoloc Service}, \textit{Premium Geoloc Service} and \textit{Transaction 
Service}). The maximum bounds of the inputs of these services are $200ms - 180ms = 20ms$, $200ms - 40ms = 160ms$ and $200ms - 130ms=70ms$ respectively. Note that, since the maximum bound of \textit{Transaction Service} is \textit{70ms}, the service \textit{Secure Payment} is out of the bounds (its output QoS is \textit{80 ms}), so it can be safely pruned.

The \textbf{third and the forth pass} analyze service equivalences and dominances in the \textit{Service Match Graph}. It is very frequent to find services from different providers that offer similar services with overlapping interfaces (inputs/outputs). In scenarios like this, it is easy to end up with large \textit{Service Match Graph} that make very hard to find optimal compositions in reasonable time. One way to reduce the complexity without losing information is to analyze the interface equivalence and dominance between services in order to \textit{combine} those that are equivalent, or replace those that are dominated in terms of the interface they provide and the QoS they offer. In a nutshell, we check three objectives to compare services: the \textit{amount} of information they need to be invoked (inputs), the \textit{amount} of information they return (outputs), and their QoS. If a set of services are equal in all objectives, they are equivalent and they can be combined into an abstract service 
with several possible implementations. If a service is equal in all objectives and at least better in one objective (it requires less information to be invoked, produces more information or has a better QoS), then the service \textit{dominates} the other service. A more detailed description of the interface and dominance optimizations is described in \cite{RodriguezMier2015}.

Note that optimizations are applied right before all semantic matches are computed in the \textit{Service Match Graph}, since the optimizations are based on the analysis of the I/O matches among services. For this reason, they cannot be applied during the calculation of the graph (this would require to precompute in advance missing relations during the graph generation, which does not provide any benefit as this is what the \textit{Service Match Graph} generation algorithm already does). On the other hand, optimizations are applied sequentially to save computation time, since the number of services in the graph decreases monotonically in each step. In order to take advantage of this, faster optimizations are applied first so that the slower optimizations in the pipeline can work with a reduced set of services.

\subsection{Hybrid algorithm}
Each service in the composition graph may have different services that match each input, thus there may exist multiple combinations of services that satisfy the composition request with the same or different QoS. The goal of the hybrid search is to extract good solutions from the composition graph, optimizing the total number of involved services in the composition and guaranteeing the optimal QoS. Thus, for each input we select just one service of the graph to match that input, until the best combination is found. The hybrid search performs a local search to extract a good solution and in the remaining time, it tries to improve the solution by running a global search.

Fig. \ref{alg:LocalSearch} shows the pseudocode of the \textbf{local search} strategy. The algorithm starts with a composition graph, the inputs of the service $Si$ marked as unresolved (the expected outputs of the request) and the service $Si$ selected to be part of the solution. An \textit{unresolved input} is an input that can be matched by many different outputs but no decision has been made yet. Using the list of the unresolved inputs to be matched, the method RANK-RESOLVERS returns a list of services that match any of the unresolved inputs. Services are ranked according to the number of unresolved inputs that match, so the service that matches more inputs is considered first to be part of the solution. Then, for each input that the selected service can match, the method CYCLE performs a forward search to check if resolving the selected input with that service leads to a cycle. For example, in Fig. \ref{fig:GraphExample}, if we select the service $K$ to match the input of $I$ after having decided to 
resolve the input of $K$ with the service $I$, we end up with an invalid composition, so $K$ is an invalid resolver for $I$ and it must be 
discarded. Once all resolvable inputs are collected in $resolved$, the method RESOLVE creates a copy of the current graph where the inputs in $unresolved$ are matched only by the selected service, i.e., any other match between any output from a different service to that input is removed from the graph. If the selected service was not already selected, then all its inputs are then marked as unresolved and a recursive call to LSBT is performed to select a new service to resolve the remaining inputs, until a solution is found. If a dead end is reached (a solution that has no services to resolve the remaining inputs without cycles) the algorithm backtracks to a previous state to try a different service (L.\ref{alg:ls:back}).

\begin{figure}
\begin{algorithmic}[1]
\Function{LOCAL-SEARCH}{$G_S=\{V,E\}$}
  \State
  \Return LSBT($G_S$, $In_{Si}$, $\{Si\}$)
\EndFunction
\State
\Function{LSBT}{$G_S$, $unresolved$, $services$}
  \If {$unresolved=\emptyset$} \Return $G_S$ \EndIf
  \State $servs \gets$ RANK-RESOLVERS($unresolved$)
  \For{\textbf{each} $w \in servs$} \label{alg:ls:back}
    \State $resolved \gets \{\}$
    \State $matched \gets Out_w \otimes unresolved$
    \For{\textbf{each} $input \in matched$}
      \If {$\neg$CYCLE($G_S,w,input$)} 
	\State $resolved \gets resolved \cup input$  
      \EndIf
    \EndFor
    \If {$resolved \neq \emptyset$}
      \State $unresolved \gets unresolved \setminus resolved$
      \If {$w \notin services$}
	\State $unresolved \gets unresolved\ \cup$ $In_w$
      \EndIf
      \State $G'_S \gets$ RESOLVE($G_S$, $w$, $resolved$)
      \State $services \gets services \cup w$
      \State $result \gets$ LSBT($G'_S$, $unresolved$, $services$)
      \If {$result \neq$ \textit{fail}} \Return $result$ \EndIf
    \EndIf
  \EndFor
  \Return \textit{fail}
\EndFunction
\end{algorithmic}
\caption{Local search algorithm to extract a composition from a graph.}
\label{alg:LocalSearch}
\end{figure}

An implementation of the \textit{CYCLE} method is provided in \ref{alg:cycle}. The algorithm performs a \textit{look-ahead} check in a breadth-first fashion to determine whether matching the selected input $i$ with an output of the service $w$ leads to a cyclic dependency. This is done by traversing only the resolved matches, i.e., inputs that are matched by just one output of a service, until the selected service $w$ is reached, proving the existence of a cycle. A more memory efficient implementation of the cycle algorithm can be done using the \textit{Tarjan's strongly connected components} algorithm \cite{tarjan1972depth}, stopping at the first strongly connected component detected. 

\begin{figure}
\begin{algorithmic}[1]
\Function{CYCLE}{$G_S=\{V,E\}, w, i_{w'}$}
  \State $W_{visited} \gets \{w'\}$
  \State $W_{new} \gets \{w'\}$
  \While{$W_{new} \neq \emptyset$}
      \State $W_{reached} \gets \{\}$
      \ForAll{$w_n \in W_{new}$}
	  \ForAll{$o_{w_n} \in Out_{w_n}$}
	    \ForAll{$i_{w'_n} \in ch_{G_S}(o_{w_n})$}
	      \If{$d^-_{G_S}(i_{w'_n}) = 1 \wedge w'_n \notin W_{visited}$}
	        \If{$w'_n = w$}
		  \Return $true$
	        \EndIf
		\State $W_{reached} \gets W_{reached} \cup w'_n$
	      \EndIf
	    \EndFor
	  \EndFor
      \EndFor
      \State $W_{new} \gets W_{reached}$
      \State $W_{visited} \gets W_{visited} \cup W_{new}$
  \EndWhile
  \State
  \Return $false$
\EndFunction
\end{algorithmic}
\caption{Näive breadth-first-search algorithm to check whether using the service $w$ to resolve the input $i_{w'}$ of a service $w'$ leads to a cycle.}
\label{alg:cycle}
\end{figure}

After the local search is used to find a good solution, the \textbf{global search} is performed in the remaining time to obtain a better solution by exhaustively exploring the space of possible solutions. 
In a nutshell, this algorithm works as follows: Given a \textit{Service Match Graph} $G_S$, with some unresolved inputs, which initially are the inputs of the service $Si$, the algorithm selects an input to be resolved and for each service candidate that can be used to resolve that input, it generates a copy of the graph $G_S$ but with the input resolved (i.e., the selected service is the only one that matches the unresolved input). The algorithm enqueues each new graph to be expanded again, and repeats the process by extracting the graph with the minimum number of services from the queue, until it eventually finds a graph with no unresolved inputs.

Fig. \ref{alg:GlobalSearch} shows the pseudocode of the global search algorithm. The algorithm starts computing the optimal QoS of the graph with the method \textit{QoS-UPDATE}. This method returns a key-value table $qos[i,q]$ where each key corresponds with an input $i$ of the graph, and each value $q$ its optimal aggregated QoS $q=V^{in}_Q(i)$. Then, the inputs of the service $Si$ of the graph are added to $I_{un}$ to mark them as unresolved (L.\ref{alg:gs:unresolved}). In order to minimize the number of possible candidates for each unresolved input, we compute and propagate a range of valid QoS values, called QoS bounds, and defined as an interval $[min,max]$. These bounds determine the range of valid accumulated QoS values of the outputs that can be used to match each of the unresolved inputs without exceeding the optimal end-to-end QoS of the final composition. The $min$ value is the optimal QoS for the input, i.e., there is no output in the graph that can match the input with a lower QoS, whereas the 
$max$ value is the maximum QoS value supported. If this bound is exceeded, the total aggregated QoS of the composition worsens. For example, in Fig. \ref{fig:RealExample}, the bounds of the input \textit{ont4:ClientID} of the service \textit{Premium Geoloc Service} are $[20ms, 160ms]$. If we exceed the min bound (\textit{20 ms}), the output QoS of the service gets worse ($> 60 ms$), which also affects the optimal QoS of the input \textit{ont1:Location}. However, as long as the max bound is not exceeded ($\leq 160 ms$), the optimal accumulated QoS of the \textit{ML Predictor Service} would not be affected.

The method \textit{COMPUTE-$V_Q$} is used to compute the value of the $V_Q$ function (Eq. \ref{eq:vq}) using the best QoS values of inputs, stored in $qos$ ($qos[i]=V^{in}_Q(i)$). A tuple $\langle  G_S, I_{un}, qos, W_{sel} \rangle$, where $G_S$ is the current graph, $I_{un}$ are the unresolved inputs of $G_S$, $qos$ is the best aggregated QoS values for each input in $G_S$ and $W_{sel}$ is the set of the selected services, defines the components of a partial solution. Each partial solution is stored in a priority queue, which is sorted by the number of services $W_{sel}$. This allows an exploration of the search space in a breadth-first fashion, so the solution with the minimum number of services is always expanded first. At each iteration, a partial solution is extracted from the queue to be refined (L.\ref{alg:gs:pop}). If the partial solution has no unresolved inputs, the solution is complete, and has the minimum number of services. If the partial solution still has some unresolved inputs, it is refined 
by selecting an unresolved input with the method \textit{SELECT}. This method selects the input to be resolved, using a \textit{minimum-remaining-values} heuristic. This heuristic selects always the input with less resolvers (services candidates) in order to minimize the branching factor. The list of services that can match the selected input with a total aggregated QoS value within the $[min, max]$ bound is calculated with the method \textit{RESOLVERS}. For each valid service, the algorithm performs a \textit{look-ahead} search to check whether using the current service to resolve the selected input leads to an unavoidable cycle. If so, the service is prematurely discarded to save computation time and space. If it does not lead to a cycle, then a copy of the graph ($G'_S$) with the selected input resolved is generated, and the input is also removed from the set of unresolved inputs. Using the optimal aggregated QoS values for the inputs of the graph, stored in $qos$, the algorithm computes the 
aggregated QoS value of the service $w$. If this value is worse than the $min$ bound (COMPUTE-$V_Q(w,qos') \succ min$), then the aggregated QoS value of some inputs and outputs of the graph may be affected. Thus, a repropagation of the QoS values for each input and output is computed again over the new graph $G'_S$ (L.\ref{alg:gs:qos}). For example, if the \textit{Business Service Info} increments its response time to \textit{40 ms}, a repropagation is required to recompute the accumulated QoS of all the services that may be affected. In this case, the \textit{Premium Geoloc Service} increments its accumulated QoS cost from \textit{60 ms} to \textit{80 ms}, as well as the optimal QoS of the \textit{ont1:Location}.

Finally, if the current service is not part of the current solution, its inputs are added to the unresolved table, and a new bound for each input is computed. The $min$ bound corresponds with the optimal value, which is stored in $qos'$. In order to compute the $max$ bound, we need to \textit{subtract} the QoS of the selected service ($F_Q(w)$) from the $max$ bound of the resolved input, using the operator $\ominus$ (L.\ref{alg:gs:maxbound}). This new partial solution is inserted in the queue to be expanded later on.

\begin{figure}
\begin{algorithmic}[1]
\Function{GLOBAL-SEARCH}{$G_S$}
  \State $qos[i,q] \gets$ QoS-UPDATE($G_S$)
  \State $max \gets $COMPUTE-$V_Q(Si,qos)$
  \State $W_{sel} \gets \{Si\}$
  \State \textit{/* $I_{un}$ is a key-value table where the keys are}
  \State \textit{unresolved inputs and the values their QoS bounds */}
  \For{$i_{Si} \in$ $In_{Si}$}
    \State $I_{un}[i_{Si}] \gets [qos[i_{Si}], max]$ \label{alg:gs:unresolved}
  \EndFor
  \State \textit{/* Queue sorted by $|W_{sel}|$*/}
  \State $queue \gets$ INSERT($\langle G_S, I_{un}, qos, W_{sel} \rangle$,$queue$)
  \While{$queue \neq \emptyset$}
      \State $\langle G_S,I_{un},qos,W_{sel} \rangle \gets$ POP($queue$) \label{alg:gs:pop}
      \If{$I_{un} = \emptyset$} \Return $G_S$ \EndIf
      \State $input \gets$ SELECT($I_{un}$)
      \State $[min,max] \gets I_{un}[input]$
      \ForAll{$w \in$ RESOLVERS($input$, $[min,max]$)}
      	\If{$\neg$CYCLE($G_S,w,input$)}
		\State $G'_S \gets$ RESOLVE($G_S,w,\{input\}$)
		\State $I'_{un} \gets$ REMOVE($i,I_{un}$)
		\State $qos' \gets qos$
		\If{COMPUTE-$V_Q(w,qos') \succ min$}  \label{alg:gs:comp}
		\State $qos' \gets$ QoS-UPDATE($G'_S$) \label{alg:gs:qos}
		\EndIf
		\If{$w \notin W_{sel}$}
		\State $W'_{sel} \gets W_{sel} \cup w$
		\State $max' \gets max \ominus F_Q(w)$ \label{alg:gs:maxbound}
		\For{$i_w \in In_w$}
			\State $min' \gets qos'[i_w]$
			\State $I'_{un}[i_w] \gets [min',max']$
		\EndFor
		\EndIf
		\EndIf
		\State $queue \gets$ INSERT($\langle G'_S, I'_{un}, qos', W'_{sel}\rangle,queue$)
   	\EndFor
  \EndWhile
  \Return \textit{fail}
\EndFunction
\end{algorithmic}
\caption{Global search algorithm to extract the optimal composition.}
\label{alg:GlobalSearch}
\end{figure}

\section{Evaluation}
\label{sec:Evaluation}
In order to evaluate the performance of the proposed approach, we conducted two different experiments. In the first experiment, we evaluated the approach using the datasets of the Web Service Challenge 2009-2010 \cite{kona2009wsc}. The goal of this first experiment was to evaluate the peformance and scalability of the proposed approach on large-scale service repositories. In the second experiment, we tested the algorithm with five random datasets in order to better analyze the differences of the performance between the local and the global search. All tests were executed with a time limit of 5 min. Solutions produced by our algorithm are represented as \textit{Service Composition Graphs} (no BPEL was generated).

\begin{table}[htbp]
  \centering
  \caption{Validation with the WSC 2009-2010}
  \def\arraystretch{1.2}
  \begin{tabular}{|l|l|c|c|c|c|c|}
  \cline{3-7}
  \multicolumn{ 2}{c|}{} & \textbf{D-01} & \textbf{D-02} & \textbf{D-03} & \textbf{D-04} & \textbf{D-05} \\ \hline
  \multicolumn{ 2}{|c|}{\textbf{\#Services in the dataset}} & 572 & 4,129 & 8,138 & 8,301 & 15,211 \\ \hline \hline
  \multicolumn{ 7}{|c|}{\textbf{Validation with Response Time}} \\ \hline
  \multicolumn{ 2}{|c|}{\textbf{Optimal Response Time (ms)}} & 500 & 1,690 & 760 & 1,470 & 4,070 \\ \hline
  \multicolumn{ 2}{|l|}{\#Graph services} & 81 & 141 & 154 & 331 & 238 \\ \hline
  \multicolumn{ 2}{|l|}{\#Graph services (opt)} & 21 & 57 & 15 & 160 & 126 \\ \hline
  \multicolumn{ 1}{|l|}{\multirow{2}*{\textbf{Local Search}}} & \#Services & 5 & 20 & 10 & 40 & 32 \\ \cline{ 2- 7}
  \multicolumn{ 1}{|l|}{} & Time (s) & 0.613 & 0.988 & 2.608 & 7.767 & 2.920 \\ \hline
  \multicolumn{ 1}{|l|}{\multirow{2}*{\textbf{Global Search}}} & \#Services & 5 & 20 & 10 & - & 32 \\ \cline{ 2- 7}
  \multicolumn{ 1}{|l|}{} & Time (s) & 0.617 & 1.580 & 2.613 & - & 24.971 \\ \hline \hline
  \multicolumn{ 7}{|c|}{\textbf{Validation with Throughput}} \\ \hline
  \multicolumn{ 2}{|c|}{\textbf{Optimal Throughput (inv/s)}} & 15,000 & 6,000 & 4,000 & 4,000 & 4,000 \\ \hline
  \multicolumn{ 2}{|l|}{\#Graph services} & 81 & 141 & 154 & 331 & 238 \\ \hline
  \multicolumn{ 2}{|l|}{\#Graph services (opt)} & 10 & 43 & 90 & 156 & 69 \\ \hline
  \multicolumn{ 1}{|l|}{\multirow{2}*{\textbf{Local Search}}} & \#Services & 5 & 20 & 15 & 62 & 31 \\ \cline{ 2- 7}
  \multicolumn{ 1}{|l|}{} & Time (s) & 0.343 & 1.173 & 1.933 & 8.571 & 2.562 \\ \hline
  \multicolumn{ 1}{|l|}{\multirow{2}*{\textbf{Global Search}}} & \#Services & 5 & 20 & 10 & - & 30 \\ \cline{ 2- 7}
  \multicolumn{ 1}{|l|}{} & Time (s) & 0.345 & 1.246 & 2.085 & - & 119.322 \\ \hline
  \end{tabular}
  \label{table:results}
\end{table}

\subsection{Web Service Challenge 2009-2010 datasets}

The datasets of the Web Service Challenge 2009-2010 range from 572 to 15,211 services with two different QoS properties: response time and throughput. 
Table \ref{table:results} shows the results obtained for each dataset and for each QoS property. The response time is the average time (measured in milliseconds) that a service takes to respond to a request. The throughput, as defined in the WSC, is the average ratio of invocations per second supported by a service. 

Row \textit{\#Graph services} shows the number of services of the composition graph and \textit{\#Graph services (opt)} the number of services after applying the graph optimizations. As can be seen, the optimizations reduce, on average, by 64\% the number of services in the initial composition graph. This indicates that equivalence and dominance analysis of the QoS and the functionality of services is a powerful technique to reduce the search space in large scale problems. Rows \textit{Local search} and \textit{Global search} show the number of services of the solution obtained with each respective method as well as the total amount of time spent in the search. The global search found the best solution for each dataset and for each QoS property, except for the dataset 04, where the composition with the minimum number of services could not be found due to combinatorial explosion. However, in those cases, the local search strategy is able to find an alternative solution very fast. Note also that, in many cases,
 the local search obtains the best solution (comparing it with the global search) except for the throughput in datasets 03 and 05. 

\begin{table}[htbp]
\caption{Comparison with the top 3 WSC 2010}
\def\arraystretch{1.2}
\begin{tabular}{|l|l|c|c|c|c|}
\cline{3-6}
\multicolumn{ 2}{c|}{} & \textbf{R.Time} & \textbf{Through.} & \textbf{Min. Serv.} & \textbf{Time (ms)} \\ \hline
\multicolumn{ 1}{|c|}{\multirow{4}*{\textbf{D-01}}} & CAS \cite{Jiang2010a} & 500 & 15,000 & \cellcolor[gray]{0.8}\textbf{5} & 78 \\ \cline{ 2- 6}
\multicolumn{ 1}{|l|}{} & RUG \cite{Aiello2009} & 500 & 15,000 & 10 & 188 \\ \cline{ 2- 6}
\multicolumn{ 1}{|l|}{} & Tsinghua \cite{Yan2009} & 500 & 15,000 & 9 & 109 \\ \cline{ 2- 6}
\multicolumn{ 1}{|l|}{} & \textbf{Our approach} & 500 & 15,000 & \cellcolor[gray]{0.8}\textbf{5} & 956 \\ \hline \hline
\multicolumn{ 1}{|c|}{\multirow{4}*{\textbf{D-02}}} & CAS  \cite{Jiang2010a} & 1,690 & 6,000 & \cellcolor[gray]{0.8}\textbf{20} & 94 \\ \cline{ 2- 6}
\multicolumn{ 1}{|l|}{} & RUG \cite{Aiello2009} & 1,690 & 6,000 & 40 & 234 \\ \cline{ 2- 6}
\multicolumn{ 1}{|l|}{} & Tsinghua \cite{Yan2009} & 1,690 & 6,000 & 36 & 140 \\ \cline{ 2- 6}
\multicolumn{ 1}{|l|}{} & \textbf{Our approach} & 1,690 & 6,000 & \cellcolor[gray]{0.8}\textbf{20} & 2,171 \\ \hline \hline
\multicolumn{ 1}{|c|}{\multirow{4}*{\textbf{D-03}}} & CAS  \cite{Jiang2010a} & 760 & 4,000 & \cellcolor[gray]{0.8}\textbf{10} & 78 \\ \cline{ 2- 6}
\multicolumn{ 1}{|l|}{} & RUG \cite{Aiello2009} & 760 & 4,000 & 11 & 234 \\ \cline{ 2- 6}
\multicolumn{ 1}{|l|}{} & Tsinghua \cite{Yan2009} & 760 & 4,000 & 18 & 125 \\ \cline{ 2- 6}
\multicolumn{ 1}{|l|}{} & \textbf{Our approach} & 760 & 4,000 & \cellcolor[gray]{0.8}\textbf{10} & 4,693 \\ \hline \hline
\multicolumn{ 1}{|c|}{\multirow{4}*{\textbf{D-04}}} & CAS  \cite{Jiang2010a} & 1,470 & 4,000 & 73 & 156 \\ \cline{ 2- 6}
\multicolumn{ 1}{|l|}{} & RUG \cite{Aiello2009} & 1470 & 4,000 & 133 & 390 \\ \cline{ 2- 6}
\multicolumn{ 1}{|l|}{} & Tsinghua \cite{Yan2009} & 1,470 & 4,000 & 133 & 188 \\ \cline{ 2- 6}
\multicolumn{ 1}{|l|}{} & \textbf{Our approach} & 1,470 & 4,000 & \cellcolor[gray]{0.8}\textbf{40} & 16,338 \\ \hline \hline
\multicolumn{ 1}{|c|}{\multirow{4}*{\textbf{D-05}}} & CAS  \cite{Jiang2010a} & 4,070 & 4,000 & 32 & 63 \\ \cline{ 2- 6}
\multicolumn{ 1}{|l|}{} & RUG \cite{Aiello2009} & 4,070 & 4,000 & 4,772 & 907 \\ \cline{ 2- 6}
\multicolumn{ 1}{|l|}{} & Tsinghua \cite{Yan2009} & 4,070 & 4,000 & 4,772 & 531 \\ \cline{ 2- 6}
\multicolumn{ 1}{|l|}{} & \textbf{Our approach} & 4,070 & 4,000 & \cellcolor[gray]{0.8}\textbf{30} & 122,242 \\ \hline
\end{tabular}
\label{table:comparison}
\end{table}

We have compared our approach with the top-3 of the Web Service Challenge 2010 \cite{weise2010web}. Table \ref{table:comparison} shows this comparison following the same format and the same rules of the Web Service Challenge. The format, rules and other details of the challenge are described in \cite{weise2010web}. Third and forth columns show the response time and the throughput obtained for each dataset. Note that, since all these algorithms minimize a single QoS, these values are computed by executing the algorithm twice, one for each QoS. Unfortunately, the results provided by the WSC organization in  \cite{weise2010web} show only the minimum number of services for both executions (fifth column). Thus, the number of services obtained for both the response time and throughput is unknown, which makes it hard to compare with our results. Even so, using the same evaluation criteria, our approach obtains the optimal QoS for the response time and the throughput, and also improves the number of services in D-04 
(40 vs 73) and D-05 (30 vs 32) with respect to the solutions obtained by the winner of the challenge (the minimum number of services obtained for each dataset is highlighted). The last column shows the total execution time of each algorithm. The total time includes the time spent to obtain the solution for the response time and for the throughput. 

Our approach takes, in general, more time to obtain a solution. 
However, it should be noted that we show the best results achieved by the hybrid approach, i.e., if the global search improves the solution of the local search, we show that solution along with the time taken by the global search. Anyway, the local search always provide a first good solution very fast. For example, as can be seen in Table \ref{table:results}, the optimal solution for D-05 has 30 services and has been obtained in 119.322 s, but the local search obtained a solution with 31 services in 2.56 s, still better than the solution with 32 services obtained by \cite{Jiang2010a} (Table \ref{table:comparison}). Moreover, it should also be noted that the problem of finding the optimal composition with minimum number of services and optimal QoS is much harder than just optimizing the QoS objective function, which is the problem solved by the participants of the WSC 2010. Although the problem is intractable and requires exponential time, it can be optimally solved for many particular instances in a 
reasonable amount of time using adequate optimizations even in large datasets as shown in Tables \ref{table:results} and \ref{table:results2}. This is one of the main reasons why a combination of a local and global search can achieve good results in a wide variety of situations, in contrast with pure greedy strategies or with pure global optimization algorithms.

We also compare the results obtained with Chen et al. \cite{Chen2012}, who offer a detailed analysis of their results. This comparison is shown in Table \ref{table:comparison2}. Solutions are compared according to their QoS and number of services. A solution is better if 1) its overall QoS is better or 2) has the same QoS but less services. The results show that our algorithm always gets same or better results. Concretely, it finds solutions with optimal QoS and less services in D-01, D-02, D-04 and D-05 (response time), and D-03 (throughput). It also finds a solution with a better QoS (4000 inv/s vs 2000 inv/s) in D-04 (throughput).

\begin{table}[htbp]
\caption{Detailed comparison with \cite{Chen2012}}
\def\arraystretch{1.2}
\begin{tabular}{l l|c|c|c|c|c|}
\cline{3-7}
 &  & \textbf{D-01} & \textbf{D-02} & \textbf{D-03} & \textbf{D-04} & \textbf{D-05} \\ \hline
\multicolumn{ 1}{|c|}{\multirow{2}*{Chen et al.}} & R. Time & 500 & 1,690 & 760 & 1,470 & 4,070 \\ \cline{ 2- 7}
\multicolumn{ 1}{|l|}{} & Services & 8 & 21 & \cellcolor[gray]{0.8}\textbf{10} & 42 & 33 \\ \hline
\multicolumn{ 1}{|c|}{\multirow{2}*{Our approach}} & R. Time & 500 & 1,690 & 760 & 1,470 & 4,070 \\ \cline{ 2- 7}
\multicolumn{ 1}{|l|}{} & Services & \cellcolor[gray]{0.8}\textbf{5} & \cellcolor[gray]{0.8}\textbf{20} & \cellcolor[gray]{0.8}\textbf{10} & \cellcolor[gray]{0.8}\textbf{40} & \cellcolor[gray]{0.8}\textbf{32} \\ \hline \hline
\multicolumn{ 1}{|c|}{\multirow{2}*{Chen et al.}} & Throughput & 15,000 & 6,000 & 4,000 & 2,000 & 4,000 \\ \cline{ 2- 7}
\multicolumn{ 1}{|l|}{} & Services & \cellcolor[gray]{0.8}\textbf{5} & \cellcolor[gray]{0.8}\textbf{20} & 21 & 40 & \cellcolor[gray]{0.8}\textbf{30} \\ \hline
\multicolumn{ 1}{|c|}{\multirow{2}*{Our approach}} & Throughput & 15,000 & 6,000 & 4,000 & \cellcolor[gray]{0.8}\textbf{4,000} & 4,000 \\ \cline{ 2- 7}
\multicolumn{ 1}{|l|}{} & Services & \cellcolor[gray]{0.8}\textbf{5} & \cellcolor[gray]{0.8}\textbf{20} & \cellcolor[gray]{0.8}\textbf{10} & 62 & \cellcolor[gray]{0.8}\textbf{30} \\ \hline
\end{tabular}
\label{table:comparison2}
\end{table}

\subsection{Randomly generated datasets}
Although the global search is able to obtain solutions with a lower number of services, a first look at the results with the WSC dataset might suggest that the difference of both strategies is not very significant, as most of the obtained solutions have the same number of services. However, this may be due to a bias in the repository, since all the datasets of the WSC are generated using the same random model. In order to better evaluate and characterize the performance of the hybrid algorithm, we generated a new set of five random datasets that range from 1,000 to 9,000 services. These datasets are available at \url{https://wiki.citius.usc.es/inv:downloadable_results:ws-random-qos}. Table \ref{table:results2} shows the solutions obtained.

\begin{table}[htbp]
\centering
\caption{Validation with random datasets}
\def\arraystretch{1.2}
\begin{tabular}{|l|l|c|c|c|c|c|}
\cline{3-7}
\multicolumn{ 2}{c|}{} & \textbf{R-01} & \textbf{R-02} & \textbf{R-03} & \textbf{R-04} & \textbf{R-05} \\ \hline
\multicolumn{ 2}{|c|}{\textbf{\#Services in the dataset}} & 1,000 & 3,000 & 5,000 & 7,000 & 9,000 \\ \hline \hline
\multicolumn{ 7}{|c|}{\textbf{Validation with Response Time}} \\ \hline
\multicolumn{ 2}{|c|}{\textbf{Optimal Response Time (ms)}} & 1,430 & 975 & 805 & 1,225 & 1,420 \\ \hline
\multicolumn{ 2}{|l|}{\#Graph Services} & 54 & 168 & 285 & 383 & 499 \\ \hline
\multicolumn{ 2}{|l|}{\#Graph Services (opt)} & 22 & 50 & 54 & 56 & 99 \\ \hline
\multicolumn{ 1}{|c|}{\multirow{2}*{\textbf{Local Search}}} & \#Services & 7 & 18 & 20 & 15 & 19 \\ \cline{ 2- 7}
\multicolumn{ 1}{|l|}{} & Time (s) & 0.183 & 0.403 & 0.422 & 0.515 & 0.641 \\ \hline
\multicolumn{ 1}{|c|}{\multirow{2}*{\textbf{Global Search}}} & \#Services & 7 & 14 & 15 & 15 & 16 \\ \cline{ 2- 7}
\multicolumn{ 1}{|l|}{} & Time (s) & 0.243 & 0.767 & 4.088 & 0.740 & 3.131 \\ \hline \hline
\multicolumn{ 7}{|c|}{\textbf{Validation with Throughput}} \\ \hline
\multicolumn{ 2}{|c|}{\textbf{Optimal Throughput (inv/s)}} & 1,000 & 2,500 & 1,500 & 2,000 & 2,500 \\ \hline
\multicolumn{ 2}{|l|}{\#Graph Services} & 54 & 168 & 285 & 383 & 499 \\ \hline
\multicolumn{ 2}{|l|}{\#Graph Services (opt)} & 19 & 46 & 133 & 116 & 103 \\ \hline
\multicolumn{ 1}{|c|}{\multirow{2}*{\textbf{Local Search}}} & \#Services & 7 & 17 & 24 & 19 & 23 \\ \cline{ 2- 7}
\multicolumn{ 1}{|l|}{} & Time (s) & 0.072 & 0.143 & 0.606 & 0.732 & 0.450 \\ \hline
\multicolumn{ 1}{|c|}{\multirow{2}*{\textbf{Global Search}}} & \#Services & 7 & 12 & 12 & 15 & 16 \\ \cline{ 2- 7}
\multicolumn{ 1}{|l|}{} & Time (s) & 0.155 & 0.310 & 2.479 & 1.485 & 1.714 \\ \hline
\end{tabular}
\label{table:results2}
\end{table}

We found that in these datasets, the solutions obtained with the global search strategy are, on average, $\approx16\%$ smaller than the ones obtained with the local search, whereas the differences in seach time are less pronounced than in the previous experiment. These findings suggest that the performance of each strategy highly depends on the underlying structure of the service repository, which is mostly determined by the number of services and the existing matching relations. 

In order to test whether these differences are statistically significant or not, we conducted a nonparametric test using the \textit{binomial sign test} for two dependent samples with a total of 20 datasets (5 WSC w/response time + 5 WSC w/throughput + 5 Random w/response time + 5 Random w/throughput). The null hypothesis was rejected with $\textit{p-value}\approx0.01$ \cite{rodriguez-fdez2015stac}, meaning that both strategies (local and global search) find significantly different solutions. Thus, a hybrid strategy can perform better in many different scenarios, since it achieves a good tradeoff between quality and execution time.

This evaluation shows that, on one hand, the combination of local and global optimization is a general and powerful technique to extract optimal compositions in diverse scenarios, as it brings the best of both worlds. This is specially important when only a little or nothing is known concerning the structure of the underlying repository of services. On the other hand, the results obtained with the Web Service Challenge 2009-2010 show that the hybrid strategy performs better than the state-of-the-art, obtaining solutions with less services and optimal QoS.

\section{Conclusions}
\label{sec:Conclusions}
In this paper we have presented a hybrid algorithm to automatically build semantic input-output based compositions minimizing the total number of services while guaranteeing the optimal QoS. The proposed approach combines a set of graph optimizations and a local-global search to extract the optimal composition from the graph. Results obtained with the Web Service Challenge 2009-2010 datasets show that the combination of graph optimizations with a local-global search strategy performs better than the state-of-the-art, as it obtained solutions with less services and optimal QoS. Moreover, the evaluation with a set of randomly generated datasets shows that the hybrid strategy is well suited to perform compositions in diverse scenarios, as it can achieve a good tradeoff between quality and execution time.


\appendices

\section{Computational Complexity}
\label{appendix:nphard}
The calculation of the optimal QoS can be computed in polynomial time for a given \textit{Service Match Graph} using classical shortest path algorithms such as Dijkstra or Bellman-Ford. But, as stated in the introduction, there can exist multiple solutions with the same global QoS but different number of services. Thus, in many scenarios, optimizing the QoS objective function is not enough to provide the best possible answer. However, it turns out that optimizing the number of services of a composition is an intractable problem. The next theorem proves that the Service Minimization Problem (SMP) is a NP-Hard combinatorial optimization problem.

\begin{theorem*}
 Finding the minimum number of services whose outputs match a given set of unresolved (unmatched) concepts is a NP-Hard combinatorial optimization problem.
\end{theorem*}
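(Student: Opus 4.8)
The plan is to establish NP-hardness by a polynomial-time reduction from the classical \textsc{Minimum Set Cover} problem, whose decision version is well known to be NP-complete. The key observation is that the statement strips away all the graph-theoretic machinery (cycles, invocability, QoS aggregation) and isolates a purely combinatorial covering question: given a target set of concepts and a family of services, each contributing a set of output concepts, select the fewest services whose outputs collectively match every target concept. This is exactly a set cover in which the universe is the set of unresolved concepts and each service plays the role of a covering set.

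First I would fix an arbitrary instance of \textsc{Set Cover}: a universe $U=\{u_1,\dots,u_n\}$ together with subsets $S_1,\dots,S_m \subseteq U$, with the goal of choosing the fewest subsets whose union is $U$. From this I construct an SMP instance as follows. I introduce $n$ distinct semantic concepts $c_1,\dots,c_n \in C$, one per universe element, and take the set of unresolved concepts to be $\{c_1,\dots,c_n\}$. For each subset $S_j$ I create a service $w_j$ with $Out_{w_j}=\{c_i \mid u_i \in S_j\}$ and $In_{w_j}=\emptyset$, embedding the concepts in the ontology so that $\mathit{degree}(c_i,c_i)=\mathit{exact}$ and $\mathit{degree}(c_i,c_k)=\mathit{fail}$ for $i \neq k$. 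With this encoding the matchmaking operator satisfies $Out_{w_j} \otimes \{c_1,\dots,c_n\} = \{c_i \mid u_i \in S_j\}$, so a service matches precisely the target concepts corresponding to the elements of its associated subset.

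Next I would verify the correspondence between solutions in both directions. A collection of services $\{w_{j_1},\dots,w_{j_k}\}$ achieves a full match of $\{c_1,\dots,c_n\}$, i.e. $\bigl(\bigcup_{t} Out_{w_{j_t}}\bigr) \otimes \{c_1,\dots,c_n\} = \{c_1,\dots,c_n\}$, if and only if $\bigcup_{t} S_{j_t}=U$, since an unresolved concept $c_i$ is matched by the chosen services exactly when some chosen output set contains $c_i$, which happens exactly when some chosen $S_{j_t}$ contains $u_i$. Hence a feasible family of $k$ services exists precisely when a set cover of size $k$ exists, and the two optima coincide. As the construction produces $n$ concepts, $m$ services, and the accompanying concept ordering in time polynomial in the size of the Set Cover instance, this is a valid polynomial-time reduction, so any algorithm solving SMP optimally would solve \textsc{Minimum Set Cover} optimally.

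The hard part will not be the reduction itself, which is essentially immediate, but arguing that the semantic matchmaking model faithfully realizes plain set membership. In particular one must exhibit an ontology (or simply a concept relation) under which $\mathit{match}(c_i,c_k)$ holds exactly when $i=k$, so that the $\otimes$ operator collapses to set intersection and no spurious matches inflate the coverage of a service. Once this encoding is pinned down, the equivalence of optima is routine, and the NP-hardness of SMP follows from the NP-hardness of \textsc{Set Cover}.
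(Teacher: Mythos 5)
Your proof is correct and follows essentially the same route as the paper: a polynomial-time reduction from \textsc{Set Cover} in which universe elements become the unresolved concepts and each subset becomes a service whose outputs match exactly the corresponding concepts. The only cosmetic difference is that you realize the matching via an explicit ontology with $\mathit{degree}(c_i,c_k)=\mathit{fail}$ for $i\neq k$, whereas the paper simply constructs the match relation directly by adding one match edge per covered element; both are valid, and you also state the biconditional between covers and feasible service sets more explicitly than the paper does.
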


\begin{proof}
We will show that the Service Minimization Problem (SMP) is NP-Hard by proving that the optimization version of the Set Cover Problem (SCP), a well-known NP-Hard problem, is polynomial-time \textit{Karp} reducible to SMP $SCP \leq_P SMP$. The optimization version of the SCP problem is defined as follows: given a set of elements $U=\{u_1,\dots,u_m\}$ and a set $S$ of subsets of $U$, find the smallest set (cover) $C \subseteq S$ of subsets of $S$ whose union is $U$. The decision version of this problem, stated as that of deciding whether exists a cover $C_{SCP}$ of size $k$ or less ($|C_{SCP}| \leq k$), is NP-Complete. We will also consider the simplest form of the SMP that can be contained in a \textit{Service Match Graph}, which is defined as follows: given a service $w_U$ and a set of candidate services $W_S=\{w_1,\dots,w_n\}$ such that $O_{w_1} \otimes I_{w_U} \neq \emptyset$ $\wedge \dots \wedge$ $O_{w_n} \otimes I_{w_U} \neq \emptyset$, select the smallest subset of services from $W_S$ such that 
the union of the outputs of the services from $W_S$, $O_{W_S}$, satisfies $O_{W_S} \otimes I_{w_U} = I_{w_U}$, i.e., the outputs of the services contained in $W_S$ match all the inputs of $w_U$. As in the SCP, the decision version of this optimization problem is defined as that of deciding whether exists a subset of candidate services $C_{SMP}$ of size $k$ or less ($|C_{SMP}| \leq k$) such that the union of the outputs of the services in $C_{SMP}$ match all the inputs of $w_U$.

In order to prove that the SMP optimization problem is NP-Hard, we need to demonstrate that its corresponding decision problem is NP-Complete. We will therefore reduce the SCP problem by means of a function $\varphi$ that transforms any arbitrary instance of the SCP into an instance of the SMP in polynomial time. We have to prove that 1) $\varphi(U,S)$ is a SMP problem; 2) $\varphi$ runs in polynomial time; and 3) there is a set covering of $\varphi(U,S)$ of size $k$ or less if and only if there is a set covering of $U$ in $S$ of size $k$ or less.

Given a pair $(U,S)$, we define $\varphi(U,S)=(w_U,W_S)$ such that:

\begin{itemize}
 \item $w_U = \{I_{w_U} = U =\{u_1,\dots,u_n\}, \emptyset \}$, where $u_i$ is the $i$th unresolved input of $w_U$.
 \item $\forall s_i = \{u_{i_1},\dots,u_{i_n}\} \in S$, $\exists w_i \in W_S$ such that $w_i=\{\emptyset, O_{w_i}\}$ and $O_{w_i} \otimes I_{w_U} = s_i$
\end{itemize}

By this definition, the $\varphi(U,S)$ maps each element $u \in U$ to an input of the service $w_U$. Each subset $s_i \in S$ is also mapped to a service whose outputs match exactly the inputs of $w_U$ that correspond with the elements of $s_i$. This mapping can be computed by adding a match from an arbitrary output of each service $w_i \in W_S$ to each input $u_i \in s_i$, which clearly runs in linear time in the size of $U$. Moreover, $\varphi(U,S)$ is a Service Minimization Problem according to its definition.

Now suppose there is a set covering $|C| \leq k, C \subseteq S$ of $U$. Thus, $\forall u \in U, \exists c_i \in C$ such that $u \in c$. From the services $(w_U, W_S)$ constructed from $(U,S)$ by $\varphi(U,S)$, there exists $w_i \in W_S$ such that $O_{w_i} \otimes I_{w_U} = c_i \subseteq I_{w_U}$, and so $\bigcup_i (O_{w_i} \otimes I_{w_U}) = I_{w_U} = C$, i.e., the outputs of the services from the set $W_S$ of size $k$ or less represent a cover of the Service Minimization Problem $\varphi(U,S)$. 

\end{proof}

\section{Algorithm Analysis and Discussion}
\label{appendix:complexity}

The proposed approach consists of a hybrid algorithm that optimizes both the global QoS and selects the composition with the minimum number of services that preserves the optimal QoS. As demonstrated in Appendix \ref{appendix:nphard}, the problem of minimizing the number of services is NP-Hard. Thus, under the $P \neq NP$ assumption, there is no polynomial time algorithm that can exactly solve this optimization problem. However, although it is in general intractable, in practice many instances of the problem, as shown in the evaluation section, can be optimally solved in reasonable time. In those situations, it may be preferable to provide optimal solutions instead of just sub-optimal ones. Our approach takes advantage of a hybrid strategy that combines a local search and a global search plus the use of preprocessing optimizations and search optimizations (minimum-remaining-values heuristic, cycle detection, QoS bounds propagation) in order to achieve a good trade-off between optimality of the solution and 
computation time. Here we analyze the complexity of the proposed techniques.

\subsection{Cycle detection}
The cycle detection is implemented as a Look-Ahead strategy, that traverses all the resolved matches, starting from the current service (the one selected to resolve a new unresolved input), until no more services are reachable. This strategy seeks to discover whether the current service is a valid candidate or not by checking if it can lead to a dependency cycle, so it can be prematurely discarded. The cycle detection algorithm takes $O(|V| + |E|)$, since every service, input, output and match between inputs and outputs have to be traversed in worst-case.

\subsection{QoS Update}
The QoS update method calculates the optimal end-to-end QoS through the graph. This method is also used to recalculate optimal QoS bounds whenever a local QoS bound is excedeed. This problem can be modeled as a shortest path problem with generalized costs for QoS (as shown in Section 4.3) and solved using Dijkstra's algorithm. The worst-case time complexity of this algorithm is as follows: given a  \textit{Service Match Graph} $G_S=(V,E)$, where $W_R \subset V$ is the set of services in the graph, there are at most $|W_R|$ calls to $POP$ method to extract the lowest scored service from the queue. Since the queue is implemented as a binary heap, the \textit{POP} and \textit{INSERT} methods have a time of $O(log(n))$, where $n$ is the size of the queue. Thus, in the worst case, the running time is $O(|W_R|\cdot log(|W_R|))$, plus the (at most) $|E|$ updates of neighbor services that are reinserted into the queue. Therefore, the overall time is $O( (|E| + |W_R|) \cdot log (W_R))$.

\subsection{Local search}
This method performs a heuristically guided local search to minimize the number of services of the optimal end-to-end QoS composition. At each step, it selects the most promising candidate by selecting the one with fewer inputs that matches the largest number of unresolved inputs. If the algorithm gets stuck at some point, i.e., it reaches a point where no service can be selected without leading to a cyclic dependency, it backtracks to try the next most promising candidate service. The algorithm calls \textit{RANK-RESOLVERS} to rank the candidates according to the number of unresolved inputs that each candidate can match and, in case of draw the service with less inputs is preferred. The sorting of services takes $O(n\cdot log(n))$ using merge sort, where $n$ is the number of services. Each time a service is selected, the method \textit{RESOLVE} creates an updated copy of the graph in $O(|V| + |E|)$. 

Assumming non-cyclic dependencies in the \textit{Service Match Graph}, in the worst case the algorithm have to select all the services from the graph until no unresolved inputs are left. Thus, in the first step $t_{|W_R|}$ the algorithm ranks all the $|W_R|$ services in $O(|W_R| \cdot log(|W_R|))$, selects the first one and generates a new copy of the graph in $O(|V|+|E|)$. The running time of this step is $O(|W_R| \cdot log(|W_R|) + O(|V| + |E|) = O(|W_R| \cdot log(|W_R|))$. In the next step $t_{|W_R|-1}$, the algorithm ranks $|W_R|-1$ services, selects the best one, creates a copy of the graph and so on. Therefore, the asymptotic upper bound of the running time of $t_{|W_R|} + t_{|W_R|-1} + \dots + t_1$ is $O(|W_R| \cdot log(|W_R|))$.

In the absence of the assumption of non-cyclic dependencies, the asymptotic upper bound analysis shows that the time complexity grows exponentially with the depth of the search, since in the worst-case the algorithm fails (backtracks) at each step until the last combination of services is explored. However, in practice, this upper bound seems far from the average-case. As shown in the evaluation (Section 6), the growth of the time with respect to the size of the graph is closer to the best-case scenario, since an exponential number of backtracks due to cylic dependencies is extremely rare. In any case, the algorithm can be easily adapted to perform better in the worst-case scenario, for example by limiting the number of candidates to the top-\textit{K} best services for each unresolved input.

\subsection{Global search}
The aim of the global search algorithm is to perform an exhaustive search to find the minimum combination of services that satisfy the composition request with optimal QoS. The algorithm explores every possible valid combination of services in a breadth-first fashion by resolving one input at a time. For each unresolved input with $k > 1$ candidates, new $k$ different states are created by calling the \textit{RESOLVE} method and pushed to the queue for further expansion. In order to calculate an asymptotic upper bound for the time complexity, we can compute the number of combinations of services that the algorithm needs to extract from the queue in the worst-case. To this end, we first count the maximum number of combinations (solutions) that we can generate for a simple graph with fixed size and then we generalize the problem for a graph of any size.

Left graph from Figure \ref{fig:flattening} shows an example of a \textit{Service Match Graph} with 4 services (excluding $Si$ and $So$). As can be seen, $Si$ requires two inputs, 1 and 2. On the other hand, the outputs of $A$ and $B$ match the input $1$ whereas the outputs of services $C$ and $D$ match the input $2$. Therefore, in order to match both inputs, we can select services $A$ and $C$, $A$ and $D$, $B$ and $C$ or $B$ and $D$ ($2\times 2$ combinations). By computing all possible combinations, we can reduce the graph from the left, where $Si$ has two inputs, to the graph from the right, where $Si$ has just one input. 

\begin{figure}
 \centering
 \includegraphics[width=\columnwidth]{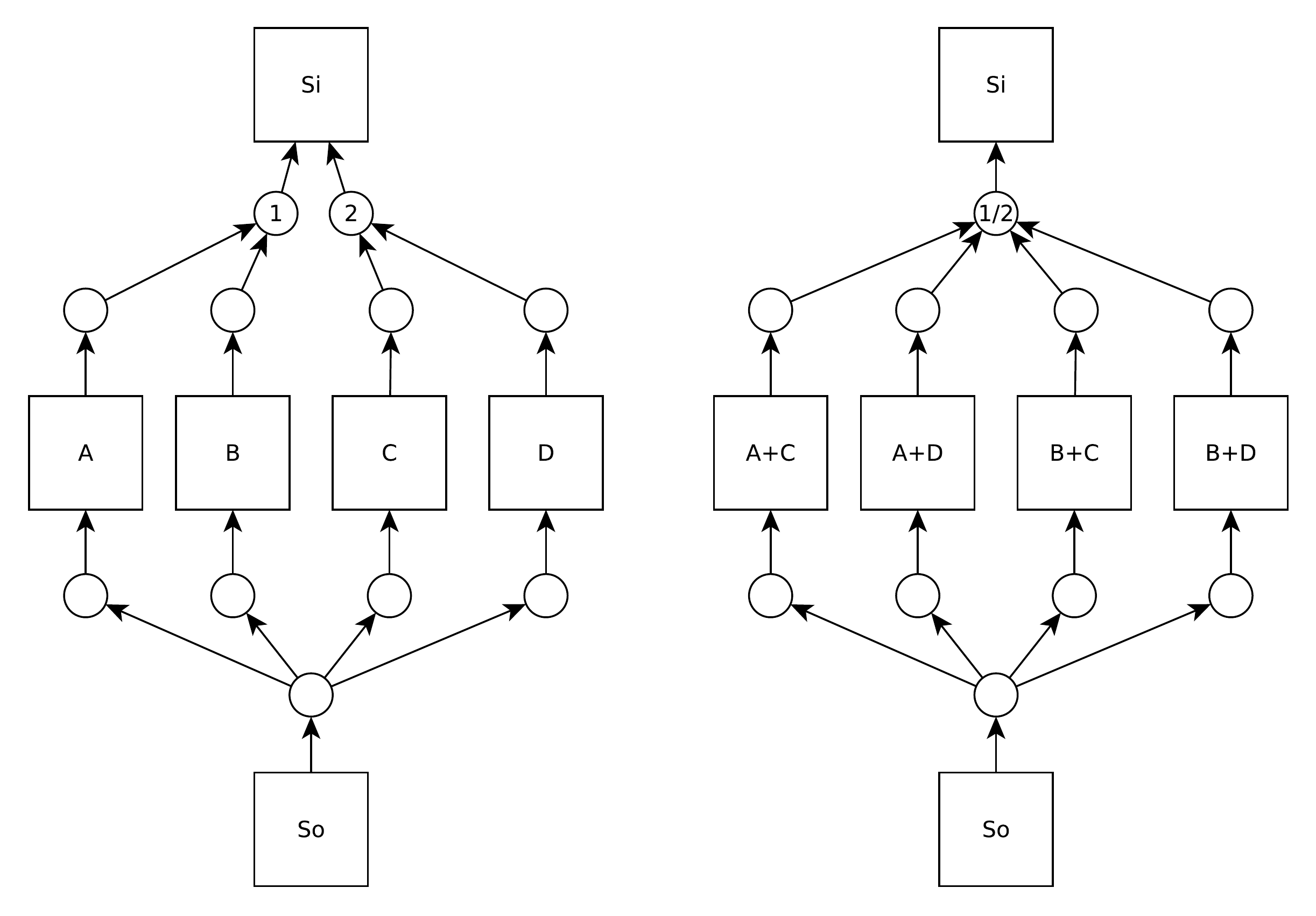}
 \caption{Reduction of the left graph into the right graph by computing all possible combinations of services}
 \label{fig:flattening}
\end{figure}

In general, given a service $w$ with $|I_w|=k$ inputs and $c_1,c_2,\dots,c_k$ set of candidate services for each input, there are $\prod_i |c_i|$ combinations of services, i.e., we can replace the $k$ inputs with $k$ sets of candidate services by one input with $\prod_i |c_i|$ candidates. Since each service can have in turn some inputs with other candidates, we can recursively replace each service with all the possible combinations of services that can be generated. This process leads to a flattening of the graph until there is just one level with all the possible combinations of services (compositions) that can be generated for a given \textit{Service Match Graph}. Thus, the problem of counting the number of possible solutions in the worst-case can be reduced to the following: given a \textit{Service Match Graph} with $|W_R|$ services, what is the maximum of products of partitions
of $W_R$? More formally, given a set 
$S$ ($|S| \geq 1$), choose $n$ partitions $c_1,c_2,\dots, c_n$ such that $\sum_i |c_i| = |S|$ and $\prod_i |c_i|$ is maximized.
For example, given 11 services, we can take 3 groups of 3 services and one with the remaining 2 services, so the product of the partition is $3^3 \cdot 2 = 54$, which is the maximum. Finding an upper bound for this value will gives us an upper bound for the maximum number of compositions that can be enumerated in the worst-case, i.e., for the most complex \textit{Service Match Graph} that can be generated with $|W_R|$ services. It can be proved that, for any set of size $n$, the maximum can be obtained by partitioning the set into groups of 2 and 3 elements, with no more than 2 groups of 2 elements. From this it follows that the maximum product is bounded by $3^{n/3}$, so we can conclude that $O(3^{n/3})$ is a tight asymptotic upper bound on the running time in the worst-case.

However, it should be noted that although the calculation of an optimal solution for the problem in the worst-case requires exponential time with the size of the graph, in practice, the number of services for a particular request is usually orders of magnitude lower that the number of available services in the dataset (see Table 2 and 4). In addition to this, the optimizations introduced in Section 5.3 plus the global QoS bound propagation, the minimum-remaining-values heuristic and cycle detection used in the global search are aimed to reduce further the size of the explored search space by decreasing the number of analyzed services.

\section{Differences with previous work}
\label{appendix:differences}
In \cite{RodriguezMier2015} we presented an integrated approach for discovery and composition of semantic Web services. However, the framework does not include any of the novelties that are presented in this approach. Our previous work presents an integrated framework for automatic I/O driven discovery and composition of semantic Web services and analyzes the impact of the discovery in the whole process, but with no QoS support. In contrast, in this work we present a hybrid composition algorithm that optimizes both QoS and the number of services, which is a different and a harder problem. The main differences are:

 \begin{itemize}
  \item The \textit{Service Model} has been extended to give support for QoS properties.
  \item The computation of the \textit{Service Match Graph} for this problem is different. In this work, all the semantic matches between all the services in the graph are computed in order to be able to guarantee an optimal end-to-end QoS. However, in \cite{RodriguezMier2015}, the \textit{Service Match Graph} contains only the matches from the outputs of previous layers to the inputs of subsequent layers, i.e., the inputs of a service that appears in the $i$th layer can be matched only by the outputs of services that are in any $j$th layer where $j \in [0, i-1]$. This condition is enough to find the smallest composition (in terms of number of services and length of the composition) but it is not enough to guarantee the optimal QoS since there are missing relations that can be part of the optimal solution.
  \item \textit{Service Match Graph} optimizations have been extended to take into account QoS. Also, a new step in the optimization pipeline has been included to prune suboptimal QoS services (i.e., services that cannot be part of the optimal solution).
  \item The proposed composition algorithm is completely different. The algorithm from \cite{RodriguezMier2015} is focused on the minimization of Web services using an A* algorithm with admissible state-space pruning. However, this technique is not enough to cope with the complexity of this new problem at large scale. Thus, we developed a new algorithm which consists of a hybrid strategy to optimize both global end-to-end QoS and the number of services, which is a different and also a harder problem.
 \end{itemize}

\section*{Acknowledgment}
This work was supported by the Spanish Ministry of Economy and Competitiveness (MEC) under grant TIN2014-56633-C3-1-R and the Galician Ministry of Education under the project CN2012/151. Pablo Rodríguez-Mier is supported by an FPU Grant from the MEC (ref. AP2010-1078).

\bibliographystyle{IEEEtran}
\bibliography{bibliography}

\end{document}